\documentclass{article}

\usepackage{times}
\usepackage{graphicx} \usepackage{subfigure} 

\usepackage{natbib}

\usepackage{algorithm}
\usepackage{algorithmic}

\usepackage{amsmath}
\usepackage{amssymb}
\usepackage{amsthm}
\usepackage{amsfonts} \usepackage{bm}
\usepackage{setspace}
\usepackage{xspace}
\usepackage{scalerel} 
\usepackage{xcolor} \usepackage{wrapfig}
\usepackage{array} \usepackage{enumitem}
\usepackage{mdframed}

\usepackage[accepted]{icml2017}

\usepackage[colorinlistoftodos, textwidth=20mm, disable]{todonotes}
\definecolor{blued}{RGB}{70,197,221}

\newcommand{\todoaout}[1]{\todo[color=yellow]{\tiny#1}}
\definecolor{citrine}{rgb}{0.89, 0.82, 0.04}

\newcommand{\todomi}[1]{\todo[color=citrine, inline]{#1}}
\newcommand{\todomiout}[1]{\todo[color=citrine]{\tiny#1}}

\makeatletter{}
\renewcommand{\Re}{\mathbb{R}}

\newcommand{\feasibleset}{\mathcal{S}}
\newcommand{\coldict}{\mathcal{I}}
\newcommand{\pvec}{\mathbf{v}}
\newcommand{\pmat}{\mathbf{V}}

\newcommand{\onlinesqueak}{\textsc{KORS}\xspace}
\newcommand{\exactkons}{\textsc{KONS}\xspace}
\newcommand{\sketchkons}{\textsc{Sketched-KONS}\xspace}

\newtheorem{assumption}{Assumption}
\newtheorem{theorem}{Theorem}
\newtheorem{corollary}{Corollary}
\newtheorem{definition}{Definition}
\newtheorem{lemma}{Lemma}
\newtheorem{proposition}{Proposition}

\def\:#1{\protect \ifmmode {\mathbf{#1}} \else {\textbf{#1}} \fi}
\newcommand{\CommaBin}{\mathbin{\raisebox{0.5ex}{,}}}

\newcommand{\wt}[1]{\widetilde{#1}}
\newcommand{\wh}[1]{\widehat{#1}}

\newcommand{\wb}[1]{\overline{#1}}

\newcommand{\bsym}[1]{\mathbf{#1}}
\newcommand{\bzero}{\mathbf{0}}

\newcommand{\bb}{\mathbf{b}}

\newcommand{\be}{\mathbf{e}}
\newcommand{\bg}{\mathbf{g}}

\newcommand{\bk}{\mathbf{k}}

\newcommand{\br}{\mathbf{r}}

\newcommand{\bu}{\mathbf{u}}
\newcommand{\bv}{\mathbf{v}}
\newcommand{\bw}{\mathbf{w}}
\newcommand{\bx}{\mathbf{x}}

\newcommand{\bA}{\mathbf{A}}

\newcommand{\bD}{\mathbf{D}}
\newcommand{\bE}{\mathbf{E}}

\newcommand{\bH}{\mathbf{H}}
\newcommand{\bI}{\mathbf{I}}

\newcommand{\bK}{\mathbf{K}}

\newcommand{\bP}{\mathbf{P}}

\newcommand{\bR}{\mathbf{R}}
\newcommand{\bS}{\mathbf{S}}

\newcommand{\bU}{\mathbf{U}}
\newcommand{\bV}{\mathbf{V}}
\newcommand{\bW}{\mathbf{W}}
\newcommand{\bX}{\mathbf{X}}
\newcommand{\bY}{\mathbf{Y}}

\DeclareMathOperator*{\argmin}{arg\,min}

\renewcommand{\epsilon}{\varepsilon}
\newcommand{\bigotime}{\mathcal{O}}
\DeclareMathOperator*{\polylog}{polylog}

\newcommand{\normsmall}[1]{\Vert #1 \Vert}
\newcommand{\normempty}[1]{\left\Vert #1 \right\Vert}

\newcommand{\transp}{{\scaleobj{.8}{\mathsf{T}}}}
\DeclareMathOperator*{\Tr}{Tr}
\DeclareMathOperator*{\Det}{Det}

\DeclareMathOperator*{\Diag}{Diag}

\newcommand{\probability}{\mathbb{P}}

\DeclareMathOperator*{\expectedvalue}{\mathbb{E}}

\newcommand{\condbar}{\;\middle|\;}

\newcommand{\bernoullidist}{\mathcal{B}}
\newcommand{\filtration}{\mathcal{F}}

\newcommand{\indfunc}{\mathbb{I}}
\newcommand{\Real}{\mathbb{R}}

\newcommand{\statespace}{\mathcal{X}}

\newcommand{\dataset}{\mathcal{D}}

\newcommand{\rkhs}{\mathcal{H}}
\newcommand{\func}{f}

\newcommand{\hilbprod}[2]{\left\langle{#1},{#2}\right\rangle_\rkhs}

\newcommand{\kerfunc}{\mathcal{K}}
\newcommand{\kermatrix}{{\bK}}
\newcommand{\featkermatrix}{\boldsymbol{\Phi}}
\newcommand{\featvec}{\boldsymbol{\phi}}
\newcommand{\deff}{d_{\text{eff}}}
\newcommand{\donl}{d_{\text{onl}}}
\newcommand{\bdeff}{\wb{d}_{\text{eff}}}
\newcommand{\bdonl}{\wb{d}_{\text{onl}}}

\newcommand{\atau}{\wt{\tau}}

\newcommand{\akermatrix}{\wt{\mathbf{K}}}
\newcommand{\bkermatrix}{\mathbf{\wb{K}}}
\newcommand{\selmatrix}{{\bS}}
\newcommand{\selrmatrix}{{\bR}}

\PassOptionsToPackage{colorlinks,
           linkcolor=red,
           citecolor=blue,
           urlcolor=magenta,
           linktocpage,
           plainpages=false}{hyperref}

\icmltitlerunning{Second-Order Kernel Online Convex Optimization with Adaptive Sketching}

\begin{document} 

\twocolumn[
\icmltitle{Second-Order Kernel Online Convex Optimization with Adaptive Sketching}

\icmlsetsymbol{equal}{*}

\begin{icmlauthorlist}
\icmlauthor{Daniele Calandriello}{sequel}
\icmlauthor{Alessandro Lazaric}{sequel}
\icmlauthor{Michal Valko}{sequel}
\end{icmlauthorlist}

\icmlaffiliation{sequel}{SequeL team, INRIA Lille - Nord Europe}

\icmlcorrespondingauthor{Daniele Calandriello}{daniele.calandriello@inria.fr}

\icmlkeywords{kernels, online learning, sketching, machine learning, ICML}

\vskip 0.3in
]

\printAffiliationsAndNotice{}  
\begin{abstract} 
\emph{Kernel online convex optimization }(KOCO) is a framework combining the expressiveness
of non-parametric kernel models with the regret guarantees of online learning.
First-order KOCO methods such as functional gradient descent
require only $\bigotime(t)$ time and space per iteration,
and, when the only information on the losses is their convexity,
achieve a minimax optimal $\bigotime(\sqrt{T})$ regret. Nonetheless,
many common losses in kernel problems, such as squared loss, logistic loss,
and squared hinge loss posses stronger curvature that can be exploited.
In this case, second-order KOCO methods achieve $\bigotime(\log(\Det(\kermatrix)))$ regret,
which we show scales as $\bigotime(\deff\log T)$, where $\deff$
is the effective dimension of the problem and is usually much smaller than
$\bigotime(\sqrt{T})$. The main drawback of second-order methods
is their much higher $\bigotime(t^2)$ space and time complexity.
In this paper, we introduce \emph{kernel online Newton step} (KONS), a new second-order
KOCO method that also achieves $\bigotime(\deff\log T)$ regret. To address
the computational complexity of second-order methods, we introduce a new
matrix sketching algorithm for the kernel matrix~$\kermatrix_t$, and show
that for a chosen parameter $\gamma \leq 1$ our Sketched-KONS
reduces the space and time complexity by a factor of $\gamma^2$ to
$\bigotime(t^2\gamma^2)$ space and time per iteration, while
incurring only $1/\gamma$ times more regret.
\end{abstract}
\makeatletter{}
\section{Introduction}\label{sec:intro}
\emph{Online convex optimization} (OCO)~\citep{zinkevich2003online} models the problem
of convex optimization over $\Real^d$ as a game over  $t \in \{1,\dots,T\}$
time steps between an adversary  and the player. 
In its linear version, that we refer to as linear-OCO (LOCO), the adversary
chooses
a sequence of arbitrary convex losses $\ell_t$ and points 
$\bx_t$, and a player chooses weights~$\bw_t$ and predicts $\bx_t^\transp\bw_t$.
The goal of the player is to minimize the regret, 
defined as the difference between the losses of the predictions obtained using the weights
played by the player and the best fixed weight in hindsight given all points and losses.

\textbf{Gradient descent.} For this setting, \citet{zinkevich2003online} showed that simple \emph{gradient descent} (GD),
combined with a smart choice for the
stepsize $\eta_t$ of the gradient updates,
achieves a $\bigotime(\sqrt{dT})$ regret with a $\bigotime(d)$ space and time cost per iteration. When the only assumption on the losses
is simple convexity, this upper bound matches
the corresponding lower bound \cite{luo_efficient_2016}, thus making
first-order methods (e.g.,\,GD) essentially unimprovable in a minimax sense.
Nonetheless, when the losses have additional curvature properties,
\citet{hazan_logarithmic_2006} show that \emph{online Newton step} (ONS),
an adaptive method that exploits second-order (second derivative) information on the losses, can achieve a \emph{logarithmic}
regret $\bigotime(d\log T)$. The downside of this adaptive method is the
larger $\bigotime(d^2)$ space and per-step time complexity, since
second-order updates require to
construct, store, and invert~$\bH_t$, a preconditioner matrix related to the Hessian of the losses used to correct the first-order updates.

\textbf{Kernel gradient descent.}
For linear models, such as the ones considered
in LOCO, a simple way to create more expressive models is to map them in
some high-dimensional space, the \emph{feature space}, and then use the
\emph{kernel trick} \cite{scholkopf2001learning} to avoid explicitly computing their
high-dimensional representation. Mapping to a larger space allows the
algorithm to better fit the losses chosen by the adversary and reduce its cumulative loss.
As a drawback, the
Kernel OCO (KOCO) problem\footnote{This setting is often referred to as \textit{online kernel learning} or \textit{kernel-based online learning} in the literature.} is fundamentally harder than LOCO, due to 1) the fact that
an infinite parametrization makes regret bounds scaling with the dimension $d$
meaningless and 2) the size of the model, and therefore time and space complexities, scales with $t$
itself, making these methods even less performant than LOCO algorithms.
Kernel extensions of LOCO algorithms
have been proposed for KOCO, such as functional GD
(e.g., NORMA, \begin{NoHyper}\citealp{kivinen_online_2004}\end{NoHyper}) which achieves a $\bigotime(\sqrt{T})$
regret with a $\bigotime(t)$ space and time cost per iteration. For second-order
methods, the Second-Order Perceptron \cite{cesa2005second} or NAROW \cite{orabona2010new}
for generic curved losses and Recursive
Kernel Least Squares \cite{zhdanov_identity_2010} or Kernel AAR \cite{gammerman_2004}
for the specific case of $\ell_2$ losses provide bounds that scale with the
log-determinant of the kernel-matrix. As we 
show, this quantity is closely related
to the effective dimension $\deff^T$ of the of the points $\: x_t$, and scales as
$\bigotime(\deff^T \log T)$, playing a similar role as the $\bigotime(d\log T)$ bound
from LOCO.

\textbf{Approximate GD.} To trade off between computational complexity \big(smaller than
$\bigotime(d^2)$\big) and improved regret (close to $\bigotime(d\log T))$,
several methods try approximate second-order updates, replacing $\bH_t$
with an approximate~$\wt{\bH}_t$ that can be efficiently stored and inverted.
AdaGrad \cite{duchi_adaptive_2011} and ADAM \cite{kingma2014adam} reweight the gradient updates on a per-coordinate basis
using a diagonal $\wt{\bH}_t$, but these methods  ultimately only improve the regret dependency on $d$ and leave
the $\sqrt{T}$ component unchanged. Sketched-ONS, by \citet{luo_efficient_2016},
uses matrix sketching to approximate
$\bH_t$ with a $r$-rank sketch $\wt{\bH}_t$, that can be efficiently stored
and updated in $\bigotime(dr^2)$ time and space, close to the $\bigotime(d)$ complexity
of diagonal approximations.
More importantly, Sketched-ONS achieves a much smaller regret compared to diagonal
approximations: When the true $\bH_t$ is of low-rank~$r$, it recovers a
$\bigotime(r\log T)$ regret bound logarithmic in $T$. Unfortunately, due to the
sketch approximation, a new term appears in the bound that scales with the
spectra of $\bH_t$, and in some cases can grow much larger than
$\bigotime(\log T)$.

\textbf{Approximate kernel GD.}
Existing approximate GD methods for KOCO focus only on first-order updates,
trying to reduce the $\bigotime(t)$ per-step complexity. Budgeted methods,
such as Budgeted-GD \cite{wang2012breaking} and budgeted variants of the
perceptron \cite{cavallanti2007tracking, dekel2008forgetron, orabona2008projectron}
explicitly limit the size of the model, using some destructive budget maintenance
procedure (e.g., removal, projection) to constrain the natural model growth over time.
Alternatively, functional approximation methods in the primal \cite{lu_large_2015}
or dual \cite{le_dual_2016} use non-linear embedding techniques, such as
random feature expansion \cite{le2013fastfood}, to reduce
the KOCO problem to a LOCO problem and solve it efficiently.
Unfortunately, to guarantee $\bigotime(\sqrt{T})$ regret using
less than $\bigotime(t)$ space and time per round w.h.p., all of these methods require additional assumptions, such as points $\bx_t$ coming from a distribution
or strong convexity on the losses.
Moreover, as approximate first-order methods, they can at most
hope to match the $\bigotime(\sqrt{T})$ regret of exact GD,
and among second-order kernel methods, no approximation
scheme has been proposed that can provably maintain the same $\bigotime(\log T)$
regret as exact GD.
In addition, approximating $\bH_t$ is harder for KOCO, since 
we cannot directly access
the matrix representation of $\bH_t$ in the feature-space,
making diagonal approximation impossible, and low-rank sketching
harder.

\textbf{Contributions} In this paper, we introduce Kernel-ONS, an extension to KOCO of the
ONS algorithm. As a second-order method, \exactkons achieves a $\bigotime(\deff^t\log T)$
regret on a variety of curved losses, and runs in $\bigotime(t^2)$ time and space.
To alleviate the computational complexity, we propose \sketchkons, the first approximate second-order KOCO methods,
that approximates the kernel matrix with a low-rank sketch.
To compute this sketch we propose a new online kernel dictionary learning, \emph{kernel
online row sampling}, based on \emph{ridge leverage scores}. By adaptively increasing the size of its sketch,
\sketchkons provides a favorable regret-performance trade-off, where for
a given factor $\gamma \leq 1$, we can increase the regret by a linear $1/\gamma$ factor
to $\bigotime(\deff^t\log(T)/\gamma)$
while obtaining a quadratic~$\gamma^2$ improvement in runtime, thereby achieving $\bigotime(t^2\gamma^2)$
space and time cost per iteration.

\makeatletter{}\section{Background}\label{sec:setting}

In this section, we introduce linear algebra and RKHS notation,
and formally state the OCO problem in an RKHS~\citep{scholkopf2001learning}.

\textbf{Notation.}
We use upper-case bold letters $\:A$ for matrices,
lower-case bold letters $\:a$ for vectors,
lower-case letters $a$ for scalars.
We denote by $[\:A]_{ij}$ and $[\:a]_i$ 
the $(i,j)$ element of a matrix and $i$-th element of a vector respectively. We 
denote by $\bI_T \in \Real^{T \times T},$ the identity matrix of dimension~$T$ and 
by $\Diag(\:a)\in\Real^{T\times T}$, the diagonal matrix with the vector 
$\:a\in\Real^T$ on the diagonal. We use $\:e_{T,i} \in \Real^{T}$ to denote the 
indicator vector of dimension $T$ for element $i$. When the dimension of 
$\bI$ and $\:e_{i}$ is clear from the context, we omit the $T$.
We  also indicate with $\bI$ the identity operator. We use 
$\:A \succeq \:B$ to indicate that $\:A-\:B$ is a positive semi-definite (PSD) matrix.
With $\normsmall{\cdot}$ we indicate the \emph{operator} $\ell_2$-norm. Finally, the set 
of integers between 1 and $T$ is denoted by $[T] := \{1,\ldots,T\}$.

\textbf{Kernels.}
Given an arbitrary input space $\statespace$ and a positive definite kernel function
$\kerfunc: \statespace \times \statespace \rightarrow \Real$, we indicate the
\emph{reproducing kernel Hilbert space} (RKHS) associated with~$\kerfunc$ as
$\rkhs$.
We choose to represent our Hilbert space $\rkhs$ as a feature space
where, given $\kerfunc$,
we can find an associated feature map
$\varphi: \statespace \rightarrow \rkhs$, such that
$\kerfunc(\bx, \bx')$ can be expressed as an inner product $\kerfunc(\bx, \bx') = \hilbprod{\varphi(\bx)}{\varphi(\bx')}$.
With a slight abuse of notation, we represent our
feature space as an high-dimensional vector space, or in other words
$\rkhs \subseteq \Real^D$, where $D$ is very large or potentially infinite.
With this notation, we can write the inner product simply as
$\kerfunc(\bx, \bx') = \varphi(\bx)^\transp \varphi(\bx')$,
and for any function $\func_{\bw} \in \rkhs$, we can
represent it as a (potentially infinite) set of weights~$\bw$ such that
$\func_{\bw}(\bx) = \varphi(\bx)^\transp\bw$.
Given points $\{\bx_i\}_{i=1}^t$,
we shorten $\varphi(\bx_i) = \featvec_i$ and define the feature matrix $\featkermatrix_t = [\featvec_1, \dots, \featvec_t]\in\Re^{D\times t}$.
Finally, to denote the inner product between two arbitrary subsets $a$ and $b$ of
columns of $\featkermatrix_T$ we  use $\kermatrix_{a,b} = \featkermatrix_a^\transp\featkermatrix_b$.
With this notation, we can write the empirical kernel matrix as $\kermatrix_t = \kermatrix_{[t],[t]} = \featkermatrix_t^\transp\featkermatrix_t$, the vector with all the similarities between
a new point and the old ones as $\bk_{[t-1],t} = \featkermatrix_{t-1}^\transp\featvec_t$,
and the kernel evaluated at a specific point as $k_{t,t} = \featvec_t^\transp \featvec_t$.
Throughout the rest of the paper, we assume that $\kerfunc$ is normalized
and $\featvec_t^\transp\featvec_t = 1$.

\textbf{Kernelized online convex optimization.}
In the general OCO framework with linear prediction, the optimization process is a game
where at each time step $t \in [T]$ the player\vspace{-.7\baselineskip}
\begin{itemize}[leftmargin=*,itemsep=0pt]
    \item[1] receives an input $\bx_t \in \statespace$ from the adversary,
    \item[2] predicts $\wh{y}_t = f_{\bw_{t}}(\bx_t) = \varphi(\bx_t)^\transp\bw_{t} = \featvec_t^\transp\bw_{t}$,
    \item[3] incurs loss $\ell_t(\wh{y}_t)$, with $\ell_t$ a
    convex and differentiable function chosen by the adversary,
    \item[4] observes the derivative $\dot{g}_t = \ell'_t(\wh{y}_t)$.             \end{itemize}
\vspace{-.5\baselineskip}
Since the player uses a linear combination $\featvec_t^\transp\bw_{t}$
to compute $\wh{y}_t$, having observed $\dot{g}_t$, we can compute the gradient,
    \begin{align*}
    \bg_t = \nabla \ell_t(\wh{y}_t) = \dot{g}_t\nabla(\featvec_t^\transp\bw_{t-1}) = \dot{g}_t\featvec_t.
    \end{align*}
After $t$ timesteps, we indicate with $\dataset_t = \{\bx_i\}_{i=1}^{t}$,
the dataset containing the points observed so far. In the rest of the paper we consider the problem of \emph{kernelized OCO} (KOCO)
where $\rkhs$ is arbitrary and potentially non-parametric.
We refer to the special parametric case $\rkhs = \Real^d$ and $\featvec_t = \bx_t$
as \emph{linear OCO} (LOCO).

In OCO, the goal is to design an algorithm that returns a solution that performs
almost as well as the best-in-class, thus we must first define our comparison class.
We define the feasible set as $\feasibleset_t = \{\bw : |\featvec_t^\transp\bw| \leq C\}$
and $\feasibleset = \cap_{t=1}^T \feasibleset_t$.
This comparison class contains all functions $f_w$ whose output is contained
(clipped) in the interval $[-C,C]$ on all points $x_1,\ldots,x_T$. Unlike the often used constraint on $\normsmall{\bw}_{\rkhs}$ \cite{hazan_logarithmic_2006, zhu_online_2015},
comparing against clipped functions \cite{luo_efficient_2016, gammerman_2004, zhdanov_identity_2010} has a clear interpretation even when passing
from $\Real^d$ to $\rkhs$. Moreover,~$\feasibleset$~is invariant to linear
transformations of $\rkhs$ and suitable for practical problems where
it is often easier to choose a reasonable interval for the predictions $\wh{y}_t$ rather than
a bound on the norm of a (possibly non-interpretable) parametrization $\bw$.
We can now define the regret as
\begin{align*}
R_T(\bw) = \sum\nolimits_{t=1}^T \ell_t(\featvec_t^\transp\bw_t) - \ell_t(\featvec_t^\transp\bw)
\end{align*}
and denote with $R_T = R_T(\bw^{*})$, the regret w.r.t.\,\@$\bw^{*} = \argmin_{\bw \in \feasibleset} \sum\nolimits_{t=1}^T \ell_t(\featvec_t^\transp\bw)$,
i.e., the best fixed function in~$\feasibleset$.
We work with the following assumptions on the losses.
\begin{assumption}\label{ass:scalar-lipschitz}
 The loss function $\ell_t$ satisfies $|\ell'_t(y)| \leq L$ whenever $y \leq C$.
\end{assumption}
\vspace{-.5\baselineskip}
Note that this is equivalent to assuming Lipschitzness of the the loss w.r.t.\,$y$ and it is weaker than assuming something on the norm of the gradient $\normsmall{\bg_t}$,
since $\normsmall{\bg_t} = |\dot{g}_t|\normsmall{\featvec_t}$.
\begin{assumption}\label{ass:convexity}
There exists $\sigma_t \geq 0$ such that for all $\bu, \bw \in \feasibleset$ ,
$l_t(\bw) = \ell_t(\featvec_t^\transp\bw)$ is lower-bounded by
\begin{align*}
l_t(\bw) \geq
l_t(\bu) + \nabla l_t(\bu)^\transp(\bw \!-\! \bu)
+ \frac{\sigma_t}{2}(\nabla l_t(\bu)^\transp(\bw \!-\! \bu))^2.
 \end{align*}
\end{assumption}
This condition is weaker than strong convexity and it is satisfied by all
exp-concave losses~\cite{hazan_logarithmic_2006}. For example,
the squared loss $l_t(\bw)~=~(y_t~-~\bx_t^\transp\bw)^2$
is not strongly convex but satisfies Asm.\,\ref{ass:convexity} with $\sigma_t = 1/(8C^2)$ when $\bw \in \feasibleset$.

\makeatletter{}
\section{Kernelized Online Newton Step}\label{sec:exact-kern-ons}
\begin{algorithm}[t]
    \begin{algorithmic}[1]
        \renewcommand{\algorithmicrequire}{\textbf{Input:}}
        \renewcommand{\algorithmicensure}{\textbf{Output:}}
        \REQUIRE Feasible parameter $C$, stepsizes $\eta_t$, regulariz. $\alpha$
                \STATE Initialize $\bw_0 = \bsym{0},\bg_0 = \bsym{0},b_0 = 0$, $\bA_0 = \alpha\bI$
        \FOR{$t = \{1, \dots, T\}$}
            \STATE receive $\bx_t$
            \STATE compute $b_s$ as in Lem.~\ref{lem:exact-computable-reformulation}
            \STATE compute $\bu_t = \bA_{t-1}^{-1}(\sum_{s=0}^{t-1}b_s\bg_s)$
            \STATE compute $\wb{y}_t = \varphi(\bx_t)^\transp\bu_t$
            \STATE predict $\wh{y}_t = \varphi(\bx_t)^\transp\bw_t = \wb{y}_{t} - h(\wb{y}_t)$
            \STATE observe $\bg_t$, update $\bA_t = \bA_{t-1} + \eta_t \bg_t\bg_t^\transp$
        \ENDFOR
    \end{algorithmic}
    \caption{One-shot \exactkons}    \label{alg:ons-implicit}
\end{algorithm}

The online Newton step algorithm,
originally introduced by~\citet{hazan_logarithmic_2006},
is a projected gradient descent that uses the following
update rules
\begin{align*}
&\; \bu_t = \bw_{t-1} - \bA_{t-1}^{-1}\bg_{t-1},\\
&
\begin{aligned}
\bw_t = \Pi_{\feasibleset_t}^{\bA_{t-1}}(\bu_{t}),
\end{aligned}
\end{align*}
where $\Pi_{\feasibleset_t}^{\bA_{t-1}}(u_t) = \argmin_{\bw \in \feasibleset_t} \normsmall{\bu_{t} - \bw}_{\bA_{t-1}}$ is an oblique projection on a set $\feasibleset_t$ with matrix $\bA_{t-1}$. If $\feasibleset_t$ is the set of vectors with bounded prediction in $[-C,C]$ as by~\citet{luo_efficient_2016}, then the projection reduces to
\begin{align}\label{eq:feature.ons}
\bw_t = \Pi_{\feasibleset_t}^{\bA_{t-1}}(\bu_{t}) = \bu_{t} - \frac{h(\featvec_t^\transp\bu_t)}{\featvec_t^\transp\bA_{t-1}^{-1}\featvec_t}\bA_{t-1}^{-1}\featvec_t,
\end{align}
where $h(z) = \text{sign}(z)\max\{|z| - C,\;0\}$ computes how much $z$ is above
or below the interval $[-C,C]$. When $\bA_t = \bI/\eta_t$, ONS is equivalent to
vanilla projected gradient descent, which in LOCO
achieves $\bigotime(\sqrt{dT})$ regret \cite{zinkevich2003online}.
In the same setting,
\citet{hazan_logarithmic_2006} shows that
choosing $\bA_t = \sum_{s=1}^t \eta_s\bg_s\bg_s^\transp+ \alpha\bI$ makes
ONS an efficient reformulation of
\emph{follow the approximate leader} (FTAL).
While traditional follow-the-leader algorithms play the weight $\bw_t =
\argmin_{\bw \in \feasibleset_t} \sum_{s=1}^{t-1} l_t(\bw)$, FTAL
replaces the loss $l_t$ with a convex
approximation using Asm.\,\ref{ass:convexity}, and plays the minimizer of the
surrogate function.
As a result, under Asm.\,\ref{ass:scalar-lipschitz}-\ref{ass:convexity} and
when $\sigma_t \geq \sigma > 0$, FTAL achieves a logarithmic $\bigotime(d\log T)$ regret.
FTAL's solution path can be computed in $\bigotime(d^2)$ time using ONS
updates, and further speedups were proposed by \citet{luo_efficient_2016} using matrix sketching.

Unfortunately, in KOCO, vectors $\featvec_t$ and weights $\bw_t$
cannot be explicitly represented, and most of the quantities used in vanilla
ONS (Eq.\,\ref{eq:feature.ons}) cannot be directly computed. Instead,
we derive a closed form alternative (Alg.\,\ref{alg:ons-implicit}) that
can be computed in practice.
Using a rescaled variant of our feature vectors $\featvec_t$, $\wb{\featvec}_t~=~\dot{g}_t\sqrt{\eta_t}\featvec_t = \sqrt{\eta_t}\bg_t$ and $\wb{\featkermatrix}_t = [\wb{\featvec}_1, \dots, \wb{\featvec}_t]$,
we can rewrite $\bA_t = \wb{\featkermatrix}_t\wb{\featkermatrix}_t^\transp + \alpha\bI$
and $\wb{\featkermatrix}_t^\transp\wb{\featkermatrix}_t = \wb{\kermatrix}_t$,
where the empirical kernel matrix~$\wb{\kermatrix}_t$ is computed using the
rescaled kernel
$\wb{\kerfunc}(\bx_i, \bx_j) = \dot{g}_i\sqrt{\eta_i}\dot{g}_j\sqrt{\eta_j}\kerfunc(\bx_i, \bx_j)$
instead of the original $\kerfunc$, or equivalently
$\wb{\kermatrix}_t = \bD_t\kermatrix_t\bD_t$ with
$\bD_t = \Diag(\{\dot{g}_i\sqrt{\eta_i}\}_{i=1}^t)$ the rescaling diagonal matrix.
We begin by noting that
\begin{align*}
\wh{y}_t &= \featvec_t^\transp\bw_t = \featvec_t^\transp\left(\bu_t - \frac{h(\featvec_t^\transp\bu_t)}{\featvec_t^\transp\bA_{t-1}^{-1}\featvec_t}\bA_{t-1}^{-1}\featvec_t\right)\\
&= \featvec_t^\transp\bu_t - h(\featvec_t^\transp\bu_t)\frac{\featvec_t^\transp\bA_{t-1}^{-1}\featvec_t}{\featvec_t^\transp\bA_{t-1}^{-1}\featvec_t}
= \wb{y}_t - h(\wb{y}_t).
\end{align*}
As a consequence, if we can find a way to compute $\wb{y}_t$, then we can obtain~$\wh{y}_t$
without explicitly computing $\bw_t$. Before that, we first derive a non-recursive formulation of $\bu_t$.
\begin{lemma}\label{lem:exact-kern-ons-u-reform}
In Alg.\,\ref{alg:ons-implicit} we introduce
\begin{align*}
 b_i = [\bb_{t}]_{i} &= \dot{g}_i\sqrt{\eta_i}\left(\wh{y}_i - \frac{h(\wb{y}_i)}{\wb{\featvec}_i^\transp\bA_{i-1}^{-1}\wb{\featvec}_i}\right)- \frac{1}{\sqrt{\eta_i}}
\end{align*}
and compute $\bu_t$ as
\begin{align*}
\bu_{t} &= \bA_{t-1}^{-1}\wb{\featkermatrix}_{t-1}\bb_{t-1}.
\end{align*}
Then,	 $\bu_t$ is equal to the same quantity in Eq.\,\ref{eq:feature.ons} and the sequence of predictions $\wh y_t$ is the same in both algorithms.
\end{lemma}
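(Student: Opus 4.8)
The claim is an identity between two ways of writing the same iterate, so the plan is a proof by induction on $t$ that exploits the rank-one structure of $\bA_t = \bA_{t-1} + \eta_t\bg_t\bg_t^\transp$. Write $\bz_t \defeq \wb{\featkermatrix}_t\bb_t = \sum_{i=1}^t b_i\wb{\featvec}_i$, so that the assertion $\bu_{t+1} = \bA_t^{-1}\wb{\featkermatrix}_t\bb_t$ becomes $\bu_{t+1} = \bA_t^{-1}\bz_t$, and observe that $\bz_t - \bz_{t-1} = b_t\wb{\featvec}_t = b_t\dot g_t\sqrt{\eta_t}\featvec_t$ is a rank-one increment along $\featvec_t$. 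The base case $t=1$ is immediate: both definitions give $\bu_1 = \bzero$ since $\bw_0 = \bg_0 = \bzero$ and $\bz_0$ is the empty sum.

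For the inductive step I would assume $\bu_t = \bA_{t-1}^{-1}\bz_{t-1}$, i.e.\ $\bA_{t-1}\bu_t = \bz_{t-1}$. First I note the induction is well posed: $\wb y_t = \featvec_t^\transp\bu_t$ and $\wh y_t = \wb y_t - h(\wb y_t)$ depend only on $\bu_t$ and on $\bA_{t-1}$ (already computed in both algorithms), so agreement of $\bu_1,\dots,\bu_t$ forces agreement of $\wb y_i, \wh y_i$ and hence of $b_1,\dots,b_{t-1}$. Then I combine the vanilla recursion $\bu_{t+1} = \bw_t - \bA_t^{-1}\bg_t$ with the projection formula~\eqref{eq:feature.ons}, $\bw_t = \bu_t - c_t\bA_{t-1}^{-1}\featvec_t$ with $c_t = h(\wb y_t)/(\featvec_t^\transp\bA_{t-1}^{-1}\featvec_t)$, to obtain
\begin{align*}
\bA_t\bu_{t+1} = \bA_t\bu_t - c_t\,\bA_t\bA_{t-1}^{-1}\featvec_t - \bg_t .
\end{align*}
Using $\bA_t = \bA_{t-1} + \eta_t\bg_t\bg_t^\transp$ and $\bg_t = \dot g_t\featvec_t$, the first term becomes $\bA_{t-1}\bu_t + \eta_t\dot g_t^2\wb y_t\featvec_t = \bz_{t-1} + \eta_t\dot g_t^2\wb y_t\featvec_t$ by the inductive hypothesis, and the middle term collapses via the rank-one identity $\bA_t\bA_{t-1}^{-1}\featvec_t = \big(1 + \eta_t\dot g_t^2\,\featvec_t^\transp\bA_{t-1}^{-1}\featvec_t\big)\featvec_t$. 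Collecting terms gives $\bA_t\bu_{t+1} = \bz_{t-1} + \beta_t\featvec_t$ for an explicit scalar $\beta_t$, and the last thing to check is the arithmetic identity $\beta_t = b_t\dot g_t\sqrt{\eta_t}$: dividing by $\dot g_t\sqrt{\eta_t}$, substituting $\wb{\featvec}_t^\transp\bA_{t-1}^{-1}\wb{\featvec}_t = \eta_t\dot g_t^2\,\featvec_t^\transp\bA_{t-1}^{-1}\featvec_t$, and using $\wh y_t = \wb y_t - h(\wb y_t)$ should reduce it exactly to the definition of $b_t$ in the statement. This yields $\bA_t\bu_{t+1} = \bz_{t-1} + b_t\wb{\featvec}_t = \bz_t$, closing the induction; equality of the prediction sequences is then automatic, since $\wh y_t = \featvec_t^\transp\bu_t - h(\featvec_t^\transp\bu_t)$ and $\bu_t$ agrees in the two algorithms.

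I expect essentially no conceptual obstacle here — it is a ``compute and verify'' lemma. The one step to get right is the rank-one identity for $\bA_t\bA_{t-1}^{-1}\featvec_t$ (a Sherman--Morrison-type manipulation), and the only tedium is bookkeeping the $\dot g_t\sqrt{\eta_t}$ rescaling relating $\featvec_t$, $\bg_t$, and $\wb{\featvec}_t$ so that the scalar coefficient $\beta_t$ lines up with the posited $b_t$.
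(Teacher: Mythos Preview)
Your proposal is correct and follows essentially the same approach as the paper: both establish the one-step recursion $\bA_t\bu_{t+1} = \bA_{t-1}\bu_t + b_t\wb{\featvec}_t$ and then unroll (you by induction, the paper by telescoping). One small simplification in the paper's organization worth noting: instead of splitting $\bA_t\bw_t = \bA_t\bu_t - c_t\,\bA_t\bA_{t-1}^{-1}\featvec_t$ and invoking a Sherman--Morrison-type identity for the second piece, the paper splits $\bA_t = \bA_{t-1} + \eta_t\bg_t\bg_t^\transp$ first, so that the projection term is hit by $\bA_{t-1}$ and collapses trivially via $\bA_{t-1}\cdot\bA_{t-1}^{-1}\featvec_t = \featvec_t$ --- same result, one fewer line of algebra.
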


While the definition of $\bb_t$ and $\bu_t$ still requires performing operations in the (possibly infinitely dimensional) feature space, in the following we show that~$\bb_t$ and the prediction~$\wb{y}_t$ can be conveniently computed using only inner products.
\begin{lemma}\label{lem:exact-computable-reformulation}
All the components $b_i = [\bb_t]_{i}$ of the vector introduced in Lem.\,\ref{lem:exact-kern-ons-u-reform} can be computed as
\begin{align*}
    \dot{g}_i\sqrt{\eta_i}\bigg(\wh{y}_i 
    - \frac{\alpha h(\wb{y}_i)}{k_{i,i} - \wb{\bk}_{[i-1],i}^\transp(\wb{\kermatrix}_{i-1} + \alpha\bI)^{-1}\wb{\bk}_{[i-1],i}}- \frac{1}{\eta_i}\bigg).
    \end{align*}
Then, we can compute
\begin{align*}
    &\wb{y}_t = \frac{1}{\alpha}\bk_{[t-1],t}^\transp\bD_{t-1}(\bb_{t-1} - (\wb{\kermatrix}_{t-1} + \alpha\bI)^{-1}\wb{\kermatrix}_{t-1}\bb_{t-1}).
\end{align*}
\end{lemma}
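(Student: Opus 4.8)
The plan is to take the feature‑space expressions for $b_i$ and $\wb y_t$ supplied by Lemma~\ref{lem:exact-kern-ons-u-reform} and eliminate every object that lives in $\rkhs$ by two applications of the Woodbury / push‑through identity, so that only inner products $\featvec_i^\transp\featvec_j=\kerfunc(\bx_i,\bx_j)$ — possibly conjugated by the rescaling matrices $\bD_t$ — survive. Throughout I would use the identities recorded just before the lemma: $\bA_{i-1}=\wb{\featkermatrix}_{i-1}\wb{\featkermatrix}_{i-1}^\transp+\alpha\bI$, $\wb{\featkermatrix}_{i-1}^\transp\wb{\featkermatrix}_{i-1}=\wb{\kermatrix}_{i-1}=\bD_{i-1}\kermatrix_{i-1}\bD_{i-1}$, $\wb{\featkermatrix}_{i-1}=\featkermatrix_{i-1}\bD_{i-1}$, and $\wb{\featvec}_i=\dot g_i\sqrt{\eta_i}\,\featvec_i$.

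\emph{Step 1 (the scalar $b_i$).} The only $\rkhs$‑valued quantity inside the formula of Lemma~\ref{lem:exact-kern-ons-u-reform} is the bilinear form $\wb{\featvec}_i^\transp\bA_{i-1}^{-1}\wb{\featvec}_i$. I would apply $(\alpha\bI+\bB\bB^\transp)^{-1}=\tfrac{1}{\alpha}\bI-\tfrac{1}{\alpha}\bB(\alpha\bI+\bB^\transp\bB)^{-1}\bB^\transp$ with $\bB=\wb{\featkermatrix}_{i-1}$ and contract on both sides with $\wb{\featvec}_i$, giving
\[
\wb{\featvec}_i^\transp\bA_{i-1}^{-1}\wb{\featvec}_i=\frac{1}{\alpha}\Big(\wb{\featvec}_i^\transp\wb{\featvec}_i-(\wb{\featkermatrix}_{i-1}^\transp\wb{\featvec}_i)^\transp(\wb{\kermatrix}_{i-1}+\alpha\bI)^{-1}(\wb{\featkermatrix}_{i-1}^\transp\wb{\featvec}_i)\Big),
\]
i.e.\ $\tfrac{1}{\alpha}$ times the Schur complement displayed in the statement, because $\wb{\featvec}_i^\transp\wb{\featvec}_i$ and $\wb{\featkermatrix}_{i-1}^\transp\wb{\featvec}_i$ are exactly the self‑ and cross‑kernel evaluations $k_{i,i}$ and $\wb{\bk}_{[i-1],i}$ once the $\dot g_i\sqrt{\eta_i}$ and $\bD_{i-1}$ rescalings are absorbed. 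Substituting this into the definition of $b_i$ and collecting the $\dot g_i,\eta_i$ prefactors yields the claimed closed form; every remaining symbol ($\wh y_i$, $h(\wb y_i)$, $\wb{\kermatrix}_{i-1}$, $\wb{\bk}_{[i-1],i}$) is a scalar or a small kernel block, hence explicitly computable.

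\emph{Step 2 (the prediction $\wb y_t$).} By Lemma~\ref{lem:exact-kern-ons-u-reform}, $\wb y_t=\featvec_t^\transp\bu_t=\featvec_t^\transp\bA_{t-1}^{-1}\wb{\featkermatrix}_{t-1}\bb_{t-1}$. I would use the same Woodbury expansion of $\bA_{t-1}^{-1}$, now left‑multiplied by $\featvec_t^\transp$ and right‑multiplied by $\wb{\featkermatrix}_{t-1}\bb_{t-1}$, replace $\wb{\featkermatrix}_{t-1}^\transp\wb{\featkermatrix}_{t-1}$ by $\wb{\kermatrix}_{t-1}$, and factor out $\featvec_t^\transp\wb{\featkermatrix}_{t-1}$, obtaining
\[
\wb y_t=\frac{1}{\alpha}\,\featvec_t^\transp\wb{\featkermatrix}_{t-1}\big(\bb_{t-1}-(\wb{\kermatrix}_{t-1}+\alpha\bI)^{-1}\wb{\kermatrix}_{t-1}\bb_{t-1}\big).
\]
Finally $\wb{\featkermatrix}_{t-1}=\featkermatrix_{t-1}\bD_{t-1}$ gives $\featvec_t^\transp\wb{\featkermatrix}_{t-1}=(\featkermatrix_{t-1}^\transp\featvec_t)^\transp\bD_{t-1}=\bk_{[t-1],t}^\transp\bD_{t-1}$, which is precisely the stated expression. (Equivalently one may first use push‑through, $\bA_{t-1}^{-1}\wb{\featkermatrix}_{t-1}=\wb{\featkermatrix}_{t-1}(\wb{\kermatrix}_{t-1}+\alpha\bI)^{-1}$, and then check $\tfrac{1}{\alpha}(\bI-(\wb{\kermatrix}_{t-1}+\alpha\bI)^{-1}\wb{\kermatrix}_{t-1})=(\wb{\kermatrix}_{t-1}+\alpha\bI)^{-1}$.)

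\emph{Where the difficulty lies.} There is no genuine mathematical obstacle: $\bA_{i-1}\succeq\alpha\bI\succ0$ and $\wb{\kermatrix}_{i-1}+\alpha\bI\succ0$, so all the inverses above exist unconditionally. The two points that need care are purely bookkeeping: (i) keeping the $\dot g_i\sqrt{\eta_i}$ scalings and the conjugations by $\bD$ consistent when moving between $\featvec_i$, $\bg_i$, and $\wb{\featvec}_i$ — this is where an off‑by‑a‑factor slip would appear; and (ii) checking that the recursion is well founded, i.e.\ that the newest component $b_{t-1}$ needed to assemble $\bu_t$ depends only on $\wh y_{t-1}$, $\wb y_{t-1}$, and $\wb{\kermatrix}_{t-2}$, all available at the end of round $t-1$, which follows by a one‑line induction on $t$.
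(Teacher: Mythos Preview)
Your proposal is correct and follows essentially the same route as the paper: both parts apply the Woodbury/push-through identity (Prop.~\ref{prop:split-for-efficiency} in the paper) to $\bA_{i-1}^{-1}=(\wb{\featkermatrix}_{i-1}\wb{\featkermatrix}_{i-1}^\transp+\alpha\bI)^{-1}$, contract with $\wb{\featvec}_i$ (resp.\ $\featvec_t$ and $\wb{\featkermatrix}_{t-1}\bb_{t-1}$), and identify the resulting inner products with $k_{i,i}$, $\wb{\bk}_{[i-1],i}$, and $\wb{\kermatrix}_{i-1}$. The additional remarks you make on invertibility and on the well-foundedness of the recursion are fine but not needed for the argument the paper actually gives.
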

Since Alg.\,\ref{alg:ons-implicit} is equivalent to ONS (Eq.\,\ref{eq:feature.ons}), 
existing regret bounds for ONS directly applies to its kernelized version.

\begin{proposition}[\citealp{luo_efficient_2016}]\label{prop:exact-ons-regret}
For any sequence of losses $\ell_t$ satisfying
Asm.\,\ref{ass:scalar-lipschitz}-\ref{ass:convexity}, the regret $R_T$ of
Alg.\,\ref{alg:ons-implicit} is bounded by $R_T \leq \alpha\normsmall{\bw^*}^2 + R_G + R_D$ 
with
\begin{align*}
R_G :=& \sum_{t=1}^T \bg_t^\transp\bA_t^{-1}\bg_t = \sum_{t=1}^T \wb{\featvec}_t^\transp(\wb{\featkermatrix_t}\wb{\featkermatrix}_t^\transp + \alpha\bI)^{-1}\wb{\featvec}_t/\eta_t\\
R_D :=& \sum_{t=1}^T (\bw_t - \bw^*)^\transp(\bA_t - \bA_{t-1} \!-\! \sigma_t\bg_t\bg_t^\transp)(\bw_t - \bw^*)\\
=&\sum_{t=1}^T (\eta_t - \sigma_t)\dot{g}_t^2(\featvec_t^\transp(\bw_t - \bw^*))^2.
\end{align*}
\end{proposition}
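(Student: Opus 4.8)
By Lemmas~\ref{lem:exact-kern-ons-u-reform} and~\ref{lem:exact-computable-reformulation}, Algorithm~\ref{alg:ons-implicit} reproduces exactly the feature-space ONS iterates $\bu_t = \bw_{t-1} - \bA_{t-1}^{-1}\bg_{t-1}$, $\bw_t = \Pi_{\feasibleset_t}^{\bA_{t-1}}(\bu_t)$, $\bA_t = \bA_{t-1} + \eta_t\bg_t\bg_t^\transp$, with prediction $\wh{y}_t = \featvec_t^\transp\bw_t$, so the plan is to bound the regret of those iterates directly; every step below manipulates only inner products and PSD orderings of the $\bA_t$, hence stays valid even though $\bw_t$ and $\bu_t$ are not explicitly representable in $\rkhs$. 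The first step is to linearize: since $\bg_t = \ell_t'(\wh{y}_t)\featvec_t = \nabla l_t(\bw_t)$ and both $\featvec_t^\transp\bw_t = \wh{y}_t$ and $\featvec_t^\transp\bw^{*}$ lie in $[-C,C]$ (so Assumption~\ref{ass:convexity} applies along the segment joining $\bw_t$ and $\bw^{*}$), instantiating it at $\bu = \bw_t,\ \bw = \bw^{*}$ and summing over $t$ gives
\begin{align*}
R_T \leq \sum_{t=1}^T \bg_t^\transp(\bw_t - \bw^{*}) - \frac{1}{2}\sum_{t=1}^T \sigma_t\big(\bg_t^\transp(\bw_t - \bw^{*})\big)^2 .
\end{align*}

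The second step bounds $\sum_t \bg_t^\transp(\bw_t - \bw^{*})$ by the usual mirror-descent telescoping, carried out in the time-varying $\bA_t$-geometry. From $\bw_t - \bu_{t+1} = \bA_t^{-1}\bg_t$, expanding $\norm{\bu_{t+1} - \bw^{*}}{\bA_t}^2$ and rearranging yields
\begin{align*}
2\bg_t^\transp(\bw_t - \bw^{*}) = \norm{\bw_t - \bw^{*}}{\bA_t}^2 - \norm{\bu_{t+1} - \bw^{*}}{\bA_t}^2 + \bg_t^\transp\bA_t^{-1}\bg_t .
\end{align*}
Because $\feasibleset_{t+1}$ is convex and $\bw^{*} \in \feasibleset \subseteq \feasibleset_{t+1}$, the Pythagorean inequality for the oblique projection $\bw_{t+1} = \Pi_{\feasibleset_{t+1}}^{\bA_t}(\bu_{t+1})$ gives $\norm{\bu_{t+1} - \bw^{*}}{\bA_t}^2 \geq \norm{\bw_{t+1} - \bw^{*}}{\bA_t}^2$; dropping that nonpositive contribution, summing over $t$, and telescoping via $\norm{\bw_{t+1} - \bw^{*}}{\bA_t}^2 = \norm{\bw_{t+1} - \bw^{*}}{\bA_{t+1}}^2 - (\bw_{t+1} - \bw^{*})^\transp(\bA_{t+1} - \bA_t)(\bw_{t+1} - \bw^{*})$, the quadratic-in-$\bw$ terms collapse to the boundary term $\norm{\bw_1 - \bw^{*}}{\bA_0}^2$ plus $\sum_t (\bw_t - \bw^{*})^\transp(\bA_t - \bA_{t-1})(\bw_t - \bw^{*})$. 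Since Algorithm~\ref{alg:ons-implicit} sets $\bw_0 = \bg_0 = \bzero$ and $\bA_0 = \alpha\bI$, one checks $\bu_1 = \bzero$ and hence $\bw_1 = \Pi_{\feasibleset_1}^{\alpha\bI}(\bzero) = \bzero$, so $\norm{\bw_1 - \bw^{*}}{\bA_0}^2 = \alpha\normsmall{\bw^{*}}^2$, and altogether
\begin{align*}
2\sum_{t=1}^T \bg_t^\transp(\bw_t - \bw^{*}) \leq \alpha\normsmall{\bw^{*}}^2 + R_G + \sum_{t=1}^T (\bw_t - \bw^{*})^\transp(\bA_t - \bA_{t-1})(\bw_t - \bw^{*}) .
\end{align*}

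Substituting this into (twice) the linearized regret from the first step, and using $\big(\bg_t^\transp(\bw_t - \bw^{*})\big)^2 = (\bw_t - \bw^{*})^\transp\bg_t\bg_t^\transp(\bw_t - \bw^{*})$ together with $\bA_t - \bA_{t-1} = \eta_t\bg_t\bg_t^\transp$, the curvature terms from Assumption~\ref{ass:convexity} fold exactly into the $-\sigma_t\bg_t\bg_t^\transp$ correction defining $R_D$:
\begin{align*}
2R_T \leq \alpha\normsmall{\bw^{*}}^2 + R_G + \sum_{t=1}^T (\bw_t - \bw^{*})^\transp(\bA_t - \bA_{t-1} - \sigma_t\bg_t\bg_t^\transp)(\bw_t - \bw^{*}) = \alpha\normsmall{\bw^{*}}^2 + R_G + R_D .
\end{align*}
Since $R_G \geq 0$ and, for the step-size choices used in the sequel ($\eta_t \geq \sigma_t$, hence $R_D \geq 0$), the right-hand side is nonnegative, this already implies the claimed bound $R_T \leq \alpha\normsmall{\bw^{*}}^2 + R_G + R_D$, with room to spare.

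The argument contains no single hard estimate; the care is all in the bookkeeping. The delicate points I would watch are: (i) applying the oblique-projection Pythagorean inequality against the \emph{time-varying} feasible sets $\feasibleset_{t+1}$, which is legitimate because only $\bw^{*}$ — not the iterates $\bw_t$ — is guaranteed to lie in every $\feasibleset_s$; (ii) performing the telescoping under the changing metrics $\bA_t$ so that exactly $\sum_t (\bw_t - \bw^{*})^\transp(\bA_t - \bA_{t-1})(\bw_t - \bw^{*})$ is left over to be absorbed by the quadratic term of Assumption~\ref{ass:convexity}; and (iii) making sure each manipulation is meaningful in a possibly infinite-dimensional $\rkhs$ — which is precisely what Lemmas~\ref{lem:exact-kern-ons-u-reform}--\ref{lem:exact-computable-reformulation} secure by expressing all of Algorithm~\ref{alg:ons-implicit} through kernel evaluations.
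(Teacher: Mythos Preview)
Your argument is the standard ONS regret proof and is correct; note that the paper does not supply its own proof of this proposition but cites it from \citet{luo_efficient_2016}, whose derivation proceeds along exactly the lines you sketch (linearize via Assumption~\ref{ass:convexity}, expand $\norm{\bu_{t+1}-\bw^*}{\bA_t}^2$, apply the Pythagorean property of the oblique projection, and telescope across the changing metrics). Your handling of the two subtle points---that Assumption~\ref{ass:convexity} only needs $\bw_t,\bw^*\in\feasibleset_t$ rather than $\feasibleset$, and that the bound $2R_T\le \alpha\normsmall{\bw^*}^2+R_G+R_D$ you actually obtain implies the stated one once $\eta_t\ge\sigma_t$---is accurate and matches how the result is used downstream in Theorems~\ref{thm:kernel-ons-regret} and~\ref{thm:sketch-ons-main-thm}.
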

In the $d$-dimensional LOCO, choosing a decreasing step-size $\eta_t = \sqrt{d/(C^2L^2t)}$
allows ONS to achieve a $\bigotime(CL\sqrt{dT})$ regret for the cases where $\sigma_t = 0$.
When $\sigma_t \geq \sigma > 0$ (e.g., when the functions are exp-concave) we can set
$\eta_t = \sigma_t$ and improve the regret to $\bigotime(d\log(T))$.
Unfortunately, these quantities hold little meaning for KOCO  with $D$-dimensional features,
since a $\bigotime(\sqrt{D})$ regret can be very large or even infinite. On the other hand, we expect the regret of KONS to depend on quantities that are more strictly related to the kernel $\wb{\kermatrix}_t$ and its complexity.
\begin{definition}
Given a kernel function $\kerfunc$, a set of points $\dataset_t = \{\bx_i\}_{i=1}^t$
and a parameter $\alpha > 0$, we define the $\alpha$-ridge leverage scores
(RLS) of point $i$ as
\begin{align}\label{eq:exact-rls}
\tau_{t,i} \!=\! \be_{t,i}^\transp\kermatrix_{t}^\transp (\kermatrix_t \!+\! \alpha\bI)^{\text{--}1} \:e_{t,i} \!=\! \featvec_i^\transp(\featkermatrix_t\featkermatrix_t^\transp \!+\! \alpha\bI)^{\text{--}1}\featvec_i,
\end{align}
and the effective dimension of $\dataset_t$ as
\begin{align}\label{eq:exact-deff}
        \deff^{t}(\alpha) &= \sum_{i=1}^t \tau_{t,i} = \Tr\left(\kermatrix_t(\kermatrix_t + \alpha \bI_t)^{-1}\right).
\end{align}
\end{definition}
In general, leverage scores have been used to measure the correlation between a point $i$ w.r.t.\,the other $t-1$ points, and therefore how essential it is in
characterizing the dataset \cite{alaoui2014fast}. As an example, if $\featvec_i$ is completely orthogonal
to the other points,
$\tau_{t,i} = \featvec_i^\transp(\featvec_i\featvec_i^\transp + \alpha\bI)^{-1}\featvec_i \leq 1/(1+\alpha)$
and its RLS is maximized,
while in the case where all the points $\bx_i$ are identical,
$\tau_{t,i} = \featvec_i^\transp(t\featvec_i\featvec_i^\transp + \alpha\bI)^{-1}\featvec_i \leq 1/(t+\alpha)$
and its RLS is minimal. While the previous definition is provided for a generic kernel function $\kerfunc$, we can easily instantiate it on $\wb\kerfunc$ and obtain the definition of $\wb\tau_{t,i}$. 
By recalling the first regret term in the decomposition of Prop.\,\ref{prop:exact-ons-regret}, we notice that
\[
R_G = \sum_{t=1}^T \wb{\featvec}_t^\transp(\wb{\featkermatrix_t}\wb{\featkermatrix}_t^\transp + \alpha\bI)^{-1}\wb{\featvec}_t/\eta_t
= \sum_{t=1}^T \wb{\tau}_{t,t}/\eta_t,
\]
which reveals a deep connection between the regret of KONS and the cumulative sum of the RLS. In other words, the RLS capture how much
the adversary can increase the regret by picking orthogonal directions that have not been seen before. While in  LOCO,  this can happen at most~$d$~times (hence the dependency on $d$ in the final regret, which is mitigated by a suitable choice of $\eta_t$), in  KOCO,  $R_G$ can grow linearly with time,
since large $\rkhs$ can have infinite near-orthogonal directions. Nonetheless, the actual growth rate is now directly related to the complexity of the sequence of points chosen by the adversary and the kernel function $\kerfunc$. While the effective dimension $\deff^t(\alpha)$ is related to the capacity of the RKHS $\mathcal{H}$ on the points in~$\dataset_t$ and it has been shown to characterize the generalization error in batch linear regression~\cite{rudi2015less}, we see that $R_G$ is rather related to the \textit{online} effective dimension $\bdonl^t(\alpha) = \sum_i \wb{\tau}_{i,i}$. Nonetheless, we show that the two quantities are also strictly related to each other.

\begin{lemma}\label{lem:bound-online-tau}
For any dataset $\dataset_T$, any $\alpha > 0$ we have
\begin{align*}
\bdonl^T(\alpha) := \sum_{t=1}^{T} \wb{\tau}_{t,t} &\leq \log(\Det(\wb{\kermatrix}_T/\alpha + \bI))\\
&\leq \bdeff^T(\alpha) (1 + \log(\normsmall{\wb{\kermatrix}_T}/\alpha + 1)).
\end{align*}
\end{lemma}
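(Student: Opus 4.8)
The plan is to prove the two inequalities separately. For the first one, $\bdonl^T(\alpha) \le \log(\Det(\wb{\kermatrix}_T/\alpha + \bI))$, I would introduce the feature-space matrices $\bA_t = \wb{\featkermatrix}_t\wb{\featkermatrix}_t^\transp + \alpha\bI$, which satisfy the rank-one recursion $\bA_t = \bA_{t-1} + \wb{\featvec}_t\wb{\featvec}_t^\transp$ with $\bA_0 = \alpha\bI$. By the Sherman--Morrison formula, $\wb{\tau}_{t,t} = \wb{\featvec}_t^\transp\bA_t^{-1}\wb{\featvec}_t = 1 - (1 + \wb{\featvec}_t^\transp\bA_{t-1}^{-1}\wb{\featvec}_t)^{-1}$, and by the matrix determinant lemma $1 + \wb{\featvec}_t^\transp\bA_{t-1}^{-1}\wb{\featvec}_t = \Det(\bA_t)/\Det(\bA_{t-1})$. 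Combining these and using the elementary bound $1 - 1/u \le \log u$ for $u > 0$ gives $\wb{\tau}_{t,t} \le \log(\Det(\bA_t)/\Det(\bA_{t-1}))$, which telescopes to $\sum_{t=1}^T \wb{\tau}_{t,t} \le \log(\Det(\bA_T)/\Det(\bA_0))$. Finally, Sylvester's identity $\Det(\bI + \wb{\featkermatrix}_T\wb{\featkermatrix}_T^\transp/\alpha) = \Det(\bI + \wb{\featkermatrix}_T^\transp\wb{\featkermatrix}_T/\alpha) = \Det(\bI + \wb{\kermatrix}_T/\alpha)$ rewrites the right-hand side in the stated finite-dimensional form; since $\wb{\featkermatrix}_T\wb{\featkermatrix}_T^\transp$ has rank at most $T$, all these determinants are well-defined even when $D = \infty$.

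For the second inequality, I would diagonalize $\wb{\kermatrix}_T$ and let $\lambda_1 \ge \dots \ge \lambda_T \ge 0$ be its eigenvalues, so that $\lambda_1 = \normsmall{\wb{\kermatrix}_T}$, $\log(\Det(\wb{\kermatrix}_T/\alpha + \bI)) = \sum_i \log(1 + \lambda_i/\alpha)$, and $\bdeff^T(\alpha) = \Tr(\wb{\kermatrix}_T(\wb{\kermatrix}_T + \alpha\bI)^{-1}) = \sum_i \lambda_i/(\lambda_i + \alpha)$. It then suffices to establish the scalar inequality $\log(1+x) \le (1 + \log(1+M))\,x/(1+x)$ for all $0 \le x \le M$ (trivially an equality at $x = 0$) and apply it termwise with $x = \lambda_i/\alpha$ and $M = \normsmall{\wb{\kermatrix}_T}/\alpha$. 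To prove the scalar inequality, divide by $x/(1+x)$ and reduce it to $f(x) := (1+x)\log(1+x)/x \le 1 + \log(1+M)$; a short computation gives $f'(x) = (x - \log(1+x))/x^2 \ge 0$, so $f$ is nondecreasing and $f(x) \le f(M)$, while $f(M) \le 1 + \log(1+M)$ is equivalent, after clearing the denominator, to the standard bound $\log(1+M) \le M$.

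The determinant algebra in the first part is routine, and the finite-rank structure of $\wb{\featkermatrix}_T$ takes care of the otherwise delicate infinite-dimensional determinants. I expect the only point requiring a bit of thought to be the choice of the per-eigenvalue comparison in the second part: one needs a bound $\log(1+x) \le c(M)\,x/(1+x)$ that is loose enough to hold for all $x \le M$ yet tight enough that summing it recovers the $\bdeff^T(\alpha)$ factor rather than a cruder $T$, and the monotonicity of $f$ is exactly what pins down $c(M) = 1 + \log(\normsmall{\wb{\kermatrix}_T}/\alpha + 1)$.
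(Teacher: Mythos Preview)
Your proposal is correct and follows essentially the same route as the paper. For the first inequality you spell out the telescoping determinant argument that the paper simply cites from \citet{hazan_logarithmic_2006}, and for the second you prove the same per-eigenvalue bound $\log(1+x)\le(1+\log(1+M))\,x/(1+x)$ via monotonicity of $f(x)=(1+x)\log(1+x)/x$, whereas the paper obtains it by the algebraic split $\log(1+x)=\log(1+x)\cdot\frac{x}{1+x}+\frac{\log(1+x)}{1+x}$ together with $\log(1+x)\le\log(1+M)$ on the first piece and $\log(1+x)\le x$ on the second; the two derivations are equivalent.
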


We first notice that in the first inequality we relate $\bdonl^T(\alpha)$ to the log-determinant of the kernel matrix $\wb{\kermatrix}_T$. This quantity appears in a large number of works on online linear prediction \citep{cesa2005second,srinivas2010gaussian} where they were connected to the maximal mutual information gain in Gaussian processes. Finally, the second inequality shows that in general the complexity of online learning is only a factor $\log T$ (in the worst case) away from the complexity of batch learning. At this point, we can generalize the regret bounds of LOCO to KOCO.

\begin{theorem}\label{thm:kernel-ons-regret}
For any sequence of losses $\ell_t$ satisfying
Asm.\,\ref{ass:scalar-lipschitz}-\ref{ass:convexity}, let
$\sigma = \min_t \sigma_t$. If $\eta_t \geq \sigma \geq 0$ for all $t$ and
$\alpha \leq \sqrt{T}$, the regret of
Alg.\,\ref{alg:ons-implicit}
is upper-bounded as
\begin{align*}
R_T \leq \alpha\normsmall{\bw^{*}}^2 + \donl^T(\alpha)/\eta_T + 4C^2 L^2\sum_{t=1}^T (\eta_t - \sigma).
\end{align*}
In particular, if for all $t$ we have $\sigma_t \geq \sigma > 0$, setting $\eta_t = \sigma$
we obtain
\begin{align*}
R_T \leq \alpha\normsmall{\bw^*}^2 + 2\deff^T\left(\alpha/(\sigma L^2)\right)\frac{\log(2 \sigma L^2 T)}{\sigma}\CommaBin
\end{align*}
otherwise, $\sigma = 0$ and setting $\eta_t = 1/(LC\sqrt{t})$ we obtain
\begin{align*}
R_T \leq \alpha\normsmall{\bw^{*}}^2 + 4LC\sqrt{T}\deff^T(\alpha/L^2)\log(2L^2T).
\end{align*}
\end{theorem}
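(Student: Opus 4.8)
The plan is to instantiate the generic ONS regret decomposition of Proposition~\ref{prop:exact-ons-regret}, $R_T \le \alpha\normsmall{\bw^{*}}^2 + R_G + R_D$, by bounding the two data-dependent terms separately: $R_D$ in terms of the step-size sequence, and $R_G$ in terms of the online effective dimension; then I would plug in the two prescribed choices of $\eta_t$ and convert the online effective dimension of the \emph{rescaled} kernel into the batch effective dimension $\deff^T$ of the original kernel via Lemma~\ref{lem:bound-online-tau}. No new algorithmic idea is needed: everything reduces to Proposition~\ref{prop:exact-ons-regret}, Lemma~\ref{lem:bound-online-tau}, Assumption~\ref{ass:scalar-lipschitz}, and the feasibility of the iterates.

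\textbf{The term $R_D$.} Starting from the closed form $R_D = \sum_{t=1}^T (\eta_t - \sigma_t)\dot{g}_t^2(\featvec_t^\transp(\bw_t - \bw^{*}))^2$, I would note that $\bw_t$ is the output of the oblique projection onto $\feasibleset_t$ and $\bw^{*}\in\feasibleset\subseteq\feasibleset_t$, so $|\featvec_t^\transp\bw_t|\le C$ and $|\featvec_t^\transp\bw^{*}|\le C$, giving $(\featvec_t^\transp(\bw_t-\bw^{*}))^2\le 4C^2$; and since $|\wh{y}_t|\le C$, Assumption~\ref{ass:scalar-lipschitz} gives $\dot{g}_t^2\le L^2$. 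Using $\sigma_t\ge\sigma$: when $\eta_t\ge\sigma_t$ the $t$-th summand is at most $4C^2L^2(\eta_t-\sigma)$, and when $\eta_t<\sigma_t$ it is negative, hence still below that quantity (which is nonnegative because $\eta_t\ge\sigma$). Summing gives $R_D\le 4C^2L^2\sum_{t=1}^T(\eta_t-\sigma)$, exactly the last term of the first display.

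\textbf{The term $R_G$.} From Proposition~\ref{prop:exact-ons-regret}, $R_G=\sum_{t=1}^T \wb{\tau}_{t,t}/\eta_t$, where $\wb{\tau}_{t,t}$ is the $\alpha$-ridge leverage score of $\wb{\featvec}_t$. Both prescribed step-size sequences are non-increasing, so $1/\eta_t\le 1/\eta_T$ and $R_G\le (1/\eta_T)\sum_t\wb{\tau}_{t,t}=\bdonl^T(\alpha)/\eta_T$, which is the first display. To convert this into the batch effective dimension of the \emph{original} kernel I would apply Lemma~\ref{lem:bound-online-tau}: $\bdonl^T(\alpha)\le\log\Det(\wb{\kermatrix}_T/\alpha+\bI)$. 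Writing $\wb{\kermatrix}_T=\bD_T\kermatrix_T\bD_T$ with $\bD_T^2=\Diag(\{\dot{g}_i^2\eta_i\})\preceq M^2\bI$ for $M^2:=\max_i\dot{g}_i^2\eta_i$, and noting that $\wb{\kermatrix}_T$ has the same spectrum as $\kermatrix_T^{1/2}\bD_T^2\kermatrix_T^{1/2}\preceq M^2\kermatrix_T$, we get $\lambda_i(\wb{\kermatrix}_T)\le M^2\lambda_i(\kermatrix_T)$ and hence $\log\Det(\wb{\kermatrix}_T/\alpha+\bI)\le\log\Det(\kermatrix_T/(\alpha/M^2)+\bI)$. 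Applying the second inequality of Lemma~\ref{lem:bound-online-tau} to $\kermatrix_T$ and using the standing normalization $\normsmall{\kermatrix_T}\le\Tr(\kermatrix_T)=T$, this is at most $\deff^T(\alpha/M^2)\big(1+\log(TM^2/\alpha+1)\big)$.

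\textbf{Instantiation and the main obstacle.} For $\sigma_t\ge\sigma>0$, set $\eta_t=\sigma$: then $R_D=0$, $\eta_T=\sigma$, $M^2\le\sigma L^2$, and—using $\alpha\le\sqrt T$ to collapse $1+\log(TM^2/\alpha+1)$ to a constant multiple of $\log(2\sigma L^2T)$ together with monotonicity of $\deff^T$ in its regularizer—one obtains $R_G\le 2\deff^T(\alpha/(\sigma L^2))\log(2\sigma L^2T)/\sigma$. For $\sigma=0$, set $\eta_t=1/(LC\sqrt t)$: then $\sum_t\eta_t\le 2\sqrt T/(LC)$ so $R_D\le 8CL\sqrt T$ (of lower order, absorbed), $1/\eta_T=LC\sqrt T$, and the same computation—again simplifying the logarithmic factor via $\alpha\le\sqrt T$—gives $R_G\le 4LC\sqrt T\,\deff^T(\alpha/L^2)\log(2L^2T)$; adding $\alpha\normsmall{\bw^{*}}^2$ finishes both corollaries. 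The only delicate step is this last conversion: keeping track of how the diagonal rescaling $\bD_t$ that maps $\bg_t$ to $\wb{\featvec}_t$ interacts with the effective dimension, i.e.\ bounding $\lambda_i(\wb{\kermatrix}_T)$ by $M^2\lambda_i(\kermatrix_T)$ and choosing the regularization level so that $\deff^T(\alpha/(\sigma L^2))$ (resp.\ $\deff^T(\alpha/L^2)$) appears with the stated constant, while simultaneously folding the $1+\log(\cdot)$ factor into the clean $\log$-form. Everything else is the routine term-by-term bookkeeping above, using only results already established.
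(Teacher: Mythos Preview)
Your proposal is correct and follows essentially the same route as the paper: bound $R_D$ via feasibility of $\bw_t,\bw^*$ and Assumption~\ref{ass:scalar-lipschitz}, bound $R_G$ by $\bdonl^T(\alpha)/\eta_T$ using non-increasing step-sizes, then invoke Lemma~\ref{lem:bound-online-tau} and convert from the rescaled kernel $\wb{\kermatrix}_T$ back to $\kermatrix_T$. The only cosmetic difference is in that last conversion: the paper rewrites $\bdeff^T(\alpha)=\Tr\big(\kermatrix_T(\kermatrix_T+\alpha\bD^{-2})^{-1}\big)$ directly and lower-bounds $\alpha\bD^{-2}\succeq(\alpha/(\sigma L^2))\bI$, whereas you pass through the log-determinant and the eigenvalue comparison $\lambda_i(\wb{\kermatrix}_T)\le M^2\lambda_i(\kermatrix_T)$---both yield the same final bound.
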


\textbf{Comparison to LOCO algorithms.} 
We first notice that the effective dimension $\deff^T(\alpha)$ can be seen as a soft rank for $\wb{\kermatrix}_T$ and that it is smaller than the rank $r$ for any $\alpha$.\footnote{This can be easily seen as $\deff^T(\alpha) = \sum_t \lambda_t / (\lambda_t+\alpha)$, where $\lambda_t$ are the eigenvalues of $\wb{\kermatrix}_T$.} For exp-concave functions (i.e., $\sigma > 0$), we slightly improve over
the bound of~\citet{luo_efficient_2016} from $\bigotime(d\log T)$ down to
$\bigotime(\deff^T(\alpha)\log T) \leq \bigotime(r\log T)$, where $r$ is the
(unknown) rank of the dataset.
Furthermore, when $\sigma\!=\! 0$, setting $\eta_t \!=\! \sqrt{1/(L^2C^2 t)}$ gives us
a regret $\bigotime(\sqrt{T}\deff^T(\alpha)) \!\leq\!
\bigotime(\sqrt{T}r)$, which is potentially much smaller than $\bigotime(\sqrt{Td})$. Furthermore, if an oracle
provided us in advance with $\deff^T(\alpha)$, setting
$\eta_t~=~\sqrt{\deff^T(\alpha)/(L^2C^2 t)}$
gives a regret $\bigotime(\sqrt{\deff^T(\alpha)T}) \leq \bigotime(\sqrt{rT})$.

\textbf{Comparison to KOCO algorithms.} Simple functional gradient descent
(e.g.,\,\textsc{NORMA}, \citealp{kivinen_online_2004}) achieves a
$\bigotime(\sqrt{T})$ regret when properly tuned \cite{zhu_online_2015},
regardless of the loss function.
For the special case of squared loss, \citet{zhdanov_identity_2010} show that Kernel Ridge Regression
achieves the same $\bigotime(\log(\Det(\wb{\kermatrix}_T/\alpha + \bI)))$ regret
as achieved by \exactkons for general exp-concave losses.
 
\makeatletter{}
\section{Kernel Online Row Sampling}\label{sec:online-squeak}

\begin{algorithm}[t]
    \begin{algorithmic}[1]
        \renewcommand{\algorithmicrequire}{\textbf{Input:}}
        \renewcommand{\algorithmicensure}{\textbf{Output:}}
        \REQUIRE Regularization $\alpha$, accuracy $\varepsilon$, budget $\beta$
                \STATE Initialize $\coldict_0 = \emptyset$
        \FOR{$t = \{0, \dots, T-1\}$}
            \STATE receive $\wb{\featvec}_t$
            \STATE construct temporary dictionary $\wb{\coldict}_t := \coldict_{t-1} \cup (t,1)$
            \STATE compute $\wt{p}_{t} = \min\{\beta\atau_{t,t}, 1\}$ using $\wb{\coldict}_t$ and Eq.~\ref{eq:rls-est-online}
                        \STATE draw $z_{t} \sim \mathcal{B}(\wt{p}_{t})$ and if $z_{t} = 1$, add $(t, 1/\wt{p}_{t})$ to $\coldict_t$
                    \ENDFOR
    \end{algorithmic}
    \caption{Kernel Online Row Sampling (\onlinesqueak)}    \label{alg:onsqueak}
\end{algorithm}

Although \exactkons achieves a low regret, storing and inverting the $\bkermatrix$
matrix requires $\bigotime(t^2)$ space and $\bigotime(t^3)$ time, which becomes
quickly unfeasible as $t$ grows.
To improve space and time efficiency, we replace $\bkermatrix_t$ with an
accurate low-rank approximation $\akermatrix_t$, constructed using a carefully
chosen dictionary $\coldict_t$ of points from $\dataset_t$.
We extend the \emph{online row sampling} (ORS)
algorithm of~\citet{cohen2016online} to the kernel setting and obtain Kernel-ORS (Alg.\,\ref{alg:onsqueak}). There are two main
obstacles to overcome in the adaptation of ORS: From an algorithmic perspective we need
to find a computable estimator for the RLS, since $\featvec_t$ cannot be
accessed directly, while from an analysis perspective we must prove that our
space and time complexity does not scale with the dimension of
$\featvec_t$ (as~\citealt{cohen2016online}), as it can potentially be infinite.

We define a dictionary $\coldict_t$ as a collection of \textit{(index, weight)} tuples $(i,1/\wt{p}_i)$
and the associated selection matrix $\selmatrix_t \in \Real^{t \times t}$ as a diagonal matrix with $1/\sqrt{\wt{p}_{i}}$ for all $i \in \coldict_t$ and 0 elsewhere.
We also introduce $\bA_t^{\coldict_t} = \wb{\featkermatrix}_{t}\selmatrix_{t}\selmatrix_{t}^\transp\wb{\featkermatrix}_{t}^\transp + \alpha\bI$ as an approximation of $\bA_t$ constructed using the dictionary $\coldict_t$.
At each time step, \onlinesqueak temporarily adds $t$ with weight 1 to the dictionary
$\coldict_{t-1}$ and constructs the temporary dictionary $\coldict_{t,*}$ and the corresponding selection matrix $\selmatrix_{t,*}$ and approximation $\bA_{t}^{\coldict_{t,*}}$. This augmented dictionary can be effectively used to compute the RLS estimator,
\begin{align}\label{eq:rls-est-online}
&\atau_{t,i} = (1 + \epsilon)\wb{\featvec}_t\big(\bA_{t}^{\coldict_{t,*}}\big)^{-1}\wb{\featvec}_t\\
&\!=\tfrac{1 + \epsilon}{\alpha}\big(\wb{k}_{t,t} - \wb{\bk}_{[t],t}^\transp\selmatrix_{t,*}(\selmatrix_{t,*}^\transp\wb{\kermatrix}_t\selmatrix_{t,*} + \alpha\bI)^{-1}\selmatrix_{t,*}^\transp\wb{\bk}_{[t],t}\big).\nonumber
\end{align}
While we introduced a similar estimator before~\citep{calandriello_disqueak_2017}, here we modified it so that $\atau_{t,i}$ is an overestimate of
the actual $\wb{\tau}_{t,i}$.
Note that all rows and columns for which $\selmatrix_{t,*}$ is zero (all points
outside the temporary dictionary $\coldict_{t,*}$) do not influence the estimator, so they can be excluded
from the computation. As a consequence, denoting by $|\coldict_{t,*}|$ the
size of the dictionary, $\atau_{t,i}$ can be efficiently computed in $\bigotime(|\coldict_{t,*}|^2)$ space
and $\bigotime(|\coldict_{t,*}|^2)$ time (using an incremental update of Eq.\,\ref{eq:rls-est-online}).
After computing the RLS, \onlinesqueak randomly chooses whether to include a point in the
dictionary using a coin-flip with probability $\wt{p}_t = \min\{\beta\atau_{t,t},1\}$ and weight $1/\wt{p}_t$, where $\beta$ is a parameter.
The following theorem gives us at each step guarantees on the accuracy of the
approximate matrices $\bA_t^{\coldict_t}$ and of estimates $\atau_{t,t}$,
as well as on the size $|\coldict_t|$ of the dictionary.
\begin{theorem}\label{thm:online-squeak}
Given parameters $0< \varepsilon \leq 1$, $0< \alpha$, $0 <\delta <1$,
let $\rho = \frac{1+\varepsilon}{1-\varepsilon}$ and
run Algorithm~\ref{alg:onsqueak} with 
$\beta \geq 3\log(T/\delta)/\varepsilon^2$. Then w.p.\,$1 - \delta$,
for all steps $t \in [T]$,
\begin{itemize}[itemsep=-2.5pt, ]
\item[\textbf{(1)}]  $ (1-\varepsilon)\bA_t \preceq  \bA_t^{\coldict_t} \preceq (1+\varepsilon)\bA_t$.
\item[\textbf{(2)}] The dictionary's size $|\coldict_t| = \sum_{s=1}^t z_{s} $ is bounded by
\begin{align*}
\hspace{-.5cm}
\sum_{s=1}^t z_{s} \leq 3\sum_{s=1}^t \wt{p}_{s}
\leq \donl^t(\alpha)\frac{3\rho\beta}{\varepsilon^2}
&\leq \deff^t(\alpha) \frac{6\rho\log^2\left(\frac{2T}{\delta}\right)}{\varepsilon^{2}}.
    \end{align*}
\item[\textbf{(3)}] Satisfies $\tau_{t,t} \leq \atau_{t,t} \leq \rho\tau_{t,t}$.
\end{itemize}
Moreover, the algorithm runs in $\bigotime(\deff^t(\alpha)^2\log^4(T))$
space, and $\wt{\bigotime}(\deff^t(\alpha)^2)$ time per
iteration.
\end{theorem}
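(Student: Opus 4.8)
The plan is to prove the three claims of Theorem~\ref{thm:online-squeak} together by a single martingale induction over $t$, adapting the potential-function argument of \citet{cohen2016online} to the RKHS setting while keeping everything expressed through inner products so that no bound ever depends on the feature-space dimension $D$. The key structural observation is that the oversampling rule $\wt p_t = \min\{\beta\atau_{t,t},1\}$ with $\beta \geq 3\log(T/\delta)/\varepsilon^2$ produces a sequence of independent (conditioned on the past) Bernoulli selections whose expected contribution, at every step, slightly overestimates the true ridge-leverage contribution of $\wb\featvec_t$. Concretely, I would set up the filtration $\filtration_t$ generated by $z_1,\dots,z_t$, and define the random operator $\bA_t^{\coldict_t} = \wb{\featkermatrix}_t\selmatrix_t\selmatrix_t^\transp\wb{\featkermatrix}_t^\transp + \alpha\bI$. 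The sandwich \textbf{(1)} is the engine: everything else follows from it.

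For claim \textbf{(1)}, I would run an induction where the inductive hypothesis at step $t-1$ is that $(1-\varepsilon)\bA_{t-1} \preceq \bA_{t-1}^{\coldict_{t-1}} \preceq (1+\varepsilon)\bA_{t-1}$. First I would show this hypothesis implies that the estimator $\atau_{t,t}$ computed from the temporary dictionary $\coldict_{t,*}$ satisfies $\tau_{t,t} \leq \atau_{t,t} \leq \rho\tau_{t,t}$ — this is claim \textbf{(3)} and it drops out by conjugating the operator inequality: since the temporary dictionary already contains $t$ with weight $1$, one gets $(1-\varepsilon)\bA_t^{\{t\}\cup\coldict_{t-1}} \preceq \bA_t^{\coldict_{t,*}}$-type relations, and multiplying by $\wb\featvec_t$ on both sides together with the $(1+\epsilon)$ inflation factor in Eq.~\ref{eq:rls-est-online} yields the two-sided bound with constant $\rho = (1+\varepsilon)/(1-\varepsilon)$. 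Then, having a valid over-estimate, I would invoke a matrix Bernstein / Freedman-type concentration inequality for the sum of independent rescaled rank-one terms $\tfrac{z_s}{\wt p_s}\wb\featvec_s\wb\featvec_s^\transp$ versus its mean, but crucially applied after projecting onto the range of $\bA_t^{1/2}$ so that the effective matrix dimension is controlled by $\deff^t$ rather than $D$; the choice $\beta \gtrsim \log(T/\delta)/\varepsilon^2$ is exactly what makes the per-step failure probability at most $\delta/T$, and a union bound over $t\in[T]$ closes the induction with total probability $1-\delta$. This concentration step, carried out in a way that is genuinely dimension-free (not merely rank-dependent), is the main obstacle and the technical heart of the proof.

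For claim \textbf{(2)}, I would condition on the event that \textbf{(1)} and \textbf{(3)} hold and bound $\sum_{s=1}^t z_s$ in two stages. The middle inequality $\sum_s \wt p_s \leq \donl^t(\alpha)\cdot\tfrac{\rho\beta}{\varepsilon^2}$ follows because $\wt p_s \leq \beta\atau_{s,s} \leq \beta\rho\tau_{s,s} \leq \beta\rho\,\wb\tau_{s,s}\cdot(\text{const})$, so that $\sum_s\wt p_s$ is controlled by the online effective dimension $\donl^t(\alpha) = \sum_s \wb\tau_{s,s}$; then Lemma~\ref{lem:bound-online-tau} converts $\donl^t$ to $\deff^t$ up to the $\log(2T/\delta)$ factor, producing the stated $\log^2$ term. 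The left inequality $\sum_s z_s \leq 3\sum_s \wt p_s$ is a standard multiplicative-Chernoff / Bernstein bound on a sum of independent Bernoullis with the same $\beta$-driven choice of constants, again union-bounded over $t$ (absorbed into the same $\delta$). Finally, the runtime and space statement is a bookkeeping corollary: by \textbf{(2)} the dictionary has size $\bigotime(\deff^t(\alpha)\log^2 T)$ w.h.p., each step recomputes Eq.~\ref{eq:rls-est-online} via an incremental rank-one update of a $|\coldict_{t,*}|\times|\coldict_{t,*}|$ inverse, giving $\bigotime(|\coldict_{t,*}|^2) = \bigotime(\deff^t(\alpha)^2\log^4 T)$ space and $\wt\bigotime(\deff^t(\alpha)^2)$ time, which I would state by simply substituting the size bound into the per-step cost already established in the text preceding the theorem.
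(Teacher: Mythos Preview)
Your high-level plan matches the paper's: establish the operator sandwich \textbf{(1)} via a matrix martingale concentration, read off \textbf{(3)} as a consequence of \textbf{(1)}, and obtain \textbf{(2)} by a Chernoff bound on the Bernoulli selections conditioned on the good event, finishing with Lemma~\ref{lem:bound-online-tau}. The bookkeeping for space and time is also as you describe.

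However, there is a genuine gap in your treatment of \textbf{(1)}. You propose an induction whose hypothesis at step $t-1$ is the sandwich $(1-\varepsilon)\bA_{t-1} \preceq \bA_{t-1}^{\coldict_{t-1}} \preceq (1+\varepsilon)\bA_{t-1}$, and then plan to apply a Freedman-type bound at step $t$. The problem is that matrix Freedman (Prop.~\ref{prop:matrix-freedman}) requires the differences $\bX_s^t = (z_s/\wt{p}_s - 1)\pvec_{t,s}\pvec_{t,s}^\transp$ to be bounded by some $R$ \emph{almost surely}, not merely on a good event. The bound $\normsmall{\bX_s^t} \leq \tau_{t,s}/\wt{p}_s \leq 1/\beta$ needs $\wt{p}_s \geq \beta\tau_{t,s}$, which in turn requires $\coldict_{s-1}$ to already be $\varepsilon$-accurate---a random event. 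Your inductive hypothesis only conditions on accuracy at a single previous step, and even if you strengthened it to all $s<t$, a conditional bound cannot be fed directly into Freedman. The paper closes this gap with a \emph{freezing} (stopping-time) construction: it defines a coupled process $\wb{\bY}_s^t$ that uses $\wb{p}_s = \wt{p}_s$ while $\normsmall{\bY_{s-1}^t} < \varepsilon$ and sets $\wb{p}_s = 1$ (so $\wb{\bX}_s = \bsym{0}$) afterwards. The frozen process satisfies $\normsmall{\wb{\bX}_s} \leq 1/\beta$ and has predictable quadratic variation at most $1/\beta$ \emph{almost surely} by construction, so Freedman applies; since $\{\normsmall{\bY_t} \geq \varepsilon\} \subseteq \{\normsmall{\wb{\bY}_t} \geq \varepsilon\}$, the bound transfers back. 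This coupling is the technical heart of the accuracy proof and is absent from your proposal.

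Two smaller points. First, the rank-one terms $\tfrac{z_s}{\wt{p}_s}\wb{\featvec}_s\wb{\featvec}_s^\transp$ are \emph{not} independent, because $\wt{p}_s$ depends on all previous coin flips; only the martingale-difference structure survives, which is exactly why Freedman and not Bernstein is needed. Second, the paper does not get the ambient dimension in Freedman down to $\deff^t$: instead it uses Prop.~\ref{prop:accuracy-primal-dual-eq} to rewrite the accuracy condition purely in terms of the $t\times t$ kernel matrix $\kermatrix_t$, so that Freedman is applied in dimension $t$ (the resulting $\log t$ factor is absorbed into $\beta$). Projecting onto the range of $\bA_t^{1/2}$ as you suggest still leaves you with an operator whose rank can be as large as $t$, so you would not obtain $\deff^t$ in the union-bound constant this way; but the primal reformulation to a $t\times t$ matrix is both sufficient and what makes the argument genuinely dimension-free.
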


The most interesting aspect of this result is that the dictionary $\coldict_t$ generated by \onlinesqueak allows to accurately approximate the $\bA_t = \wb{\featkermatrix}_{t}\wb{\featkermatrix}_{t}^\transp + \alpha\bI$ matrix up to a small $(1 \pm \varepsilon)$ multiplicative factor with a small time and space complexity, which makes it a natural candidate to sketch \exactkons.
 
\makeatletter{}\section{Sketched ONS}\label{sec:skethed-ons}

\begin{algorithm}[t]
    \begin{algorithmic}[1]
        \renewcommand{\algorithmicrequire}{\textbf{Input:}}
        \renewcommand{\algorithmicensure}{\textbf{Output:}}
        \REQUIRE Feasible parameter $C$, stepsizes $\eta_t$, regulariz. $\alpha$
                \STATE Initialize $\bw_0 = \bsym{0},\bg_0 = \bsym{0},b_0 = 0$, $\wt{\bA}_0 = \alpha\bI$
        \STATE Initialize independent run of \onlinesqueak
        \FOR{$t = \{1, \dots, T\}$}
            \STATE receive $\bx_t$
            \STATE compute $\wt{\bu}_t = \wt{\bA}_{t-1}^{-1}(\sum_{s=0}^{t-1}\wt{b}_s\bg_s)$
            \STATE compute $\breve{y}_t = \varphi(\bx_t)^\transp\wt{\bu}_t$
            \STATE predict $\wt{y}_t = \varphi(\bx_t)^\transp\wt{\bw}_t = \breve{y}_{t} - h(\breve{y}_t)$, observe $\bg_t$
            \STATE compute $\atau_{t,t}$ using \onlinesqueak (Eq.\,\ref{eq:rls-est-online})
            \STATE compute $\wt{p}_{t} = \max\{\min\{\beta\atau_{t,t}, 1\},\gamma\}$
            \STATE draw $z_{t} \sim \mathcal{B}(\wt{p}_{t})$
            \STATE update $\wt{\bA}_t = \wt{\bA}_{t-1} + \eta_t z_t \bg_t\bg_t^\transp$
        \ENDFOR
    \end{algorithmic}
    \caption{\sketchkons}    \label{alg:ons-sketch}
\end{algorithm}

Building on \onlinesqueak, we now introduce a sketched variant of KONS that can efficiently trade off
between computational performance and regret.
Alg.\,\ref{alg:ons-sketch} runs \onlinesqueak as a black-box estimating RLS $\atau_t$, that are then used to sketch the original matrix $\bA_t$
with a matrix $\wt{\bA}_t = \sum_{s=1}^t \eta_t z_t \bg_t\bg_t^\transp$,
where at each step we add the current gradient $\bg_t\bg_t^\transp$ only
if the coin flip $z_t$ succeeded. Unlike \onlinesqueak,
the elements added to $\wt{\bA}_t$ are not weighted, and the probabilities $\wt{p}_{t}$
used for the coins $z_t$ are chosen as the maximum between $\atau_{t,t}$,
and a parameter $0 \leq \gamma \leq 1$.
Let $\bR_t$ be the unweighted counterpart of $\bS_t$, that is
$[\bR_t]_{i,j} = 0$ if $[\bS_t]_{i,j} = 0$ and $[\bR_t]_{i,j} = 1$
if $[\bS_t]_{i,j} \neq 0$. Then we can efficiently compute the coefficients
$\wt{b}_t$ and predictions $\wt{y}_t$ as follows.
\begin{lemma}\label{lem:sketched-computable-reformulation}
Let $\bE_t = \selrmatrix_{t}^\transp\wb{\kermatrix}_{t}\selrmatrix_{t} + \alpha\bI$ be an auxiliary matrix, then
all the components $\wt{b}_i = [\wt{\bb}_t]_{i}$ used in Alg.\,\ref{alg:ons-sketch} can be computed as
\begin{align*}
    \dot{g}_i\sqrt{\eta_i}\bigg(\wt{y}_i 
    - \frac{\alpha h(\breve{y}_i)}{k_{i,i} - \wb{\bk}_{[i-1],i}^\transp\selrmatrix_{i-1}\bE_{i-1}^{-1}\selrmatrix_{i-1}\wb{\bk}_{[i-1],i}} - \frac{1}{\eta_i}\bigg).
\end{align*}
Then we can compute
\begin{align*}
    \breve{y}_t = \frac{1}{\alpha}\big(&\bk_{[t-1],t}^\transp\bD_{t-1}\bb_{t-1}\\
    &- \bk_{[t-1],t}^\transp\bD_{t-1}\selrmatrix_{t-1}\bE_{t-1}^{-1}\selrmatrix_{t-1}\wb{\kermatrix}_{t-1}\bb_{t-1}\big).
\end{align*}
\end{lemma}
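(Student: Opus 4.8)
The statement is purely algebraic: it claims that the feature-space quantities $\wt{b}_i$ and $\breve y_t$ produced by Alg.~\ref{alg:ons-sketch} can be rewritten using only kernel evaluations and the sketched sub-blocks of $\wb{\kermatrix}_t$. The plan is therefore to replay the two-step derivation behind Lemmas~\ref{lem:exact-kern-ons-u-reform} and~\ref{lem:exact-computable-reformulation}, now carrying the $0/1$ selection matrix $\selrmatrix_t$ through every identity. The structural fact that makes this possible is that, under the rescaling $\wb{\featvec}_s=\dot{g}_s\sqrt{\eta_s}\featvec_s$, the update $\wt{\bA}_t=\wt{\bA}_{t-1}+\eta_t z_t\bg_t\bg_t^\transp$ telescopes to $\wt{\bA}_t=\sum_{s=1}^t z_s\wb{\featvec}_s\wb{\featvec}_s^\transp+\alpha\bI=\wb{\featkermatrix}_t\selrmatrix_t\selrmatrix_t^\transp\wb{\featkermatrix}_t^\transp+\alpha\bI$, i.e.\ $\wt{\bA}_t$ has exactly the same shape as $\bA_t$ with $\wb{\featkermatrix}_t$ replaced by $\wb{\featkermatrix}_t\selrmatrix_t$; hence $\bE_t=\selrmatrix_t^\transp\wb{\kermatrix}_t\selrmatrix_t+\alpha\bI$ plays, for the sketch, exactly the role that $\wb{\kermatrix}_t+\alpha\bI$ plays in \exactkons.

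First I would derive a non-recursive form for $\wt{\bu}_t$, mirroring Lemma~\ref{lem:exact-kern-ons-u-reform}. Substituting the oblique-projection step $\wt{\bw}_t=\wt{\bu}_t-\tfrac{h(\breve y_t)}{\featvec_t^\transp\wt{\bA}_{t-1}^{-1}\featvec_t}\wt{\bA}_{t-1}^{-1}\featvec_t$ of Eq.~\ref{eq:feature.ons} into $\wt{\bu}_{t+1}=\wt{\bw}_t-\wt{\bA}_t^{-1}\bg_t$ and iterating, while using Sherman--Morrison in the form $\wt{\bA}_t^{-1}=\wt{\bA}_{t-1}^{-1}-z_t\tfrac{\wt{\bA}_{t-1}^{-1}\wb{\featvec}_t\wb{\featvec}_t^\transp\wt{\bA}_{t-1}^{-1}}{1+z_t\,\wb{\featvec}_t^\transp\wt{\bA}_{t-1}^{-1}\wb{\featvec}_t}$ (which holds for both $z_t=0$ and $z_t=1$), the rank-one corrections accumulated over rounds collapse into a single vector, exactly as in the exact case. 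This reproduces $\wt{\bu}_t=\wt{\bA}_{t-1}^{-1}\sum_{s=0}^{t-1}\wt{b}_s\bg_s$ of line~5 and leaves only two non-computable ingredients: the self-score $\wb{\featvec}_i^\transp\wt{\bA}_{i-1}^{-1}\wb{\featvec}_i$ appearing in each $\wt{b}_i$, and the product $\featvec_t^\transp\wt{\bA}_{t-1}^{-1}\big(\sum_s\wt{b}_s\bg_s\big)$ giving $\breve y_t$.

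Second I would discharge both ingredients with the push-through/Woodbury identities applied to $\bB=\wb{\featkermatrix}_{t-1}\selrmatrix_{t-1}$, for which $\wt{\bA}_{t-1}=\bB\bB^\transp+\alpha\bI$, $\bB^\transp\bB+\alpha\bI=\bE_{t-1}$, and $(\bB\bB^\transp+\alpha\bI)^{-1}=\tfrac1\alpha\bI-\tfrac1\alpha\bB\bE_{t-1}^{-1}\bB^\transp$. Plugging this into $\wb{\featvec}_i^\transp\wt{\bA}_{i-1}^{-1}\wb{\featvec}_i$ and pulling out the common $\dot{g}_i^2\eta_i$ rescaling turns the self-score into $\tfrac1\alpha\big(k_{i,i}-\wb{\bk}_{[i-1],i}^\transp\selrmatrix_{i-1}\bE_{i-1}^{-1}\selrmatrix_{i-1}\wb{\bk}_{[i-1],i}\big)$, the denominator in the stated formula for $\wt{b}_i$; plugging it into $\breve y_t$, and using $\wb{\featkermatrix}_{t-1}^\transp\featvec_t=\bD_{t-1}\bk_{[t-1],t}$ and $\wb{\featkermatrix}_{t-1}^\transp\wb{\featkermatrix}_{t-1}=\wb{\kermatrix}_{t-1}$, gives the claimed two-term expression $\breve y_t=\tfrac1\alpha\big(\bk_{[t-1],t}^\transp\bD_{t-1}\wt{\bb}_{t-1}-\bk_{[t-1],t}^\transp\bD_{t-1}\selrmatrix_{t-1}\bE_{t-1}^{-1}\selrmatrix_{t-1}\wb{\kermatrix}_{t-1}\wt{\bb}_{t-1}\big)$. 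All surviving objects are kernel values ($k_{i,i}$, $\bk_{[\cdot],\cdot}$), the diagonal rescaling $\bD_t$, or the selected block $\selrmatrix_t^\transp\wb{\kermatrix}_t\selrmatrix_t$; since $\selrmatrix_t$ annihilates every coordinate outside $\coldict_t$, the inverse of $\bE_{t-1}$ acts only on the $|\coldict_{t-1}|$ dictionary coordinates, which is what makes the update cheap.

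The step I expect to be the main obstacle is the telescoping of the first part. Unlike Lemma~\ref{lem:exact-kern-ons-u-reform}, the preconditioner carries a different index in $\wt{\bu}_t$ ($\wt{\bA}_{t-1}^{-1}$) than in $\wt{\bw}_{t-1}$ ($\wt{\bA}_{t-2}^{-1}$), so the Sherman--Morrison corrections must be re-referenced to a common matrix before they cancel, and the clipping terms $h(\breve y_s)$ couple round $t$ to \emph{all} earlier rounds, so one must prove by induction that the entire history compresses into the single summary $\wt{\bb}_{t-1}$. The $z_t$ bookkeeping is the delicate point: a rejected round ($z_t=0$) leaves $\wt{\bA}_t=\wt{\bA}_{t-1}$ and must enter the telescope as the ``trivial'' correction, yet its gradient $\bg_t$ still appears in $\sum_s\wt{b}_s\bg_s$. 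Once this is set up, the second part is a routine, if constant-heavy (the $\dot{g}_s$, $\sqrt{\eta_s}$ and $\bD_t$ factors must be tracked consistently between feature and kernel space), sequence of Woodbury steps identical in shape to the proof of Lemma~\ref{lem:exact-computable-reformulation}.
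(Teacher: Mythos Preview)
Your second step is exactly the paper's proof: apply Prop.~\ref{prop:split-for-efficiency} (the push-through identity) with $\bB=\wb{\featkermatrix}_{t-1}\selrmatrix_{t-1}$, so that $\wt{\bA}_{t-1}^{-1}=\tfrac1\alpha(\bI-\bB\bE_{t-1}^{-1}\bB^\transp)$, and read off the two kernel-space formulas for $\wb{\featvec}_i^\transp\wt{\bA}_{i-1}^{-1}\wb{\featvec}_i$ and for $\breve y_t=\featvec_t^\transp\wt{\bA}_{t-1}^{-1}\wb{\featkermatrix}_{t-1}\wt{\bb}_{t-1}$. That is the entirety of the paper's argument for this lemma; it takes the non-recursive form $\wt{\bu}_t=\wt{\bA}_{t-1}^{-1}\wb{\featkermatrix}_{t-1}\wt{\bb}_{t-1}$ and the feature-space definition of $\wt b_i$ as inherited verbatim from Lemma~\ref{lem:exact-kern-ons-u-reform}, since the only structural change is $\wb{\featkermatrix}_t\mapsto\wb{\featkermatrix}_t\selrmatrix_t$.

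Your first step (re-deriving the telescoping) is therefore extra rigor the paper omits here, and the route you propose for it is heavier than necessary. The paper's proof of Lemma~\ref{lem:exact-kern-ons-u-reform} never invokes Sherman--Morrison: it writes $\bu_{t+1}=\bA_t^{-1}(\bA_t\bw_t-\bg_t)$, expands $\bA_t\bw_t=\bA_{t-1}\bw_t+\eta_t\bg_t\bg_t^\transp\bw_t$, and then iterates $\bA_{t-1}\bu_t=\bA_{t-1}\bw_{t-1}-\bg_{t-1}$ inside. The outer inverse stays fixed at $\bA_t^{-1}$ throughout the unrolling, so there is no ``re-referencing to a common matrix'' and no need to match Sherman--Morrison corrections across rounds; the same trick works for $\wt{\bA}_t$ with the single change that $\eta_t\bg_t\bg_t^\transp$ becomes $\eta_t z_t\bg_t\bg_t^\transp$. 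The obstacle you flag is thus an artifact of the Sherman--Morrison route rather than an intrinsic difficulty of the sketched version.
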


Note that since the columns in $\bR_t$ are selected without weights,
$(\selrmatrix_{t}^\transp\wb{\kermatrix}_{t}\selrmatrix_{t} + \alpha\bI)^{-1}$
can be updated efficiently using block inverse updates, and only when
$\wt{\bA}_t$ changes. While the specific reason for choosing the unweighted sketch $\wt{\bA}_t$ instead of the weighted version $\bA_t^{\coldict_t}$ used in \onlinesqueak is discussed further in Sect.~\ref{sec:discussion}, the following corollary shows that $\wt{\bA}_t$ is as accurate as $\bA_t^{\coldict_t}$ in approximating $\bA_t$ up to the smallest sampling probability $\wt{p}_t^\gamma$.

\begin{corollary}\label{cor:skons-from-kors}
Let $\wt{p}_{\min}^\gamma = \min_{t=1}^T \wt{p}_t^\gamma$. Then w.h.p., we have
\begin{align*}
(1-\varepsilon)\wt{p}_{\min}\bA_t \preceq \wt{p}_{\min}\bA_t^{\coldict_t} \preceq \wt{\bA}_t.
\end{align*}
\end{corollary}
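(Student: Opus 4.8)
Write $\bA_t^{\coldict_t} = \wb{\featkermatrix}_t\selmatrix_t\selmatrix_t^\transp\wb{\featkermatrix}_t^\transp + \alpha\bI$ for the \emph{weighted} sketch attached to the coin flips $z_s$ drawn by \sketchkons, i.e.\ $\selmatrix_t\selmatrix_t^\transp$ is diagonal with entry $1/\wt{p}_s^\gamma$ on every row $s\le t$ with $z_s=1$ and $0$ elsewhere. The plan is to use $\bA_t^{\coldict_t}$ as an intermediate term and prove the two inequalities separately. The observation that makes this work is that $\wt{\bA}_t$ is exactly the \emph{unweighted} version of the same sketch: since $\wb{\featvec}_s=\sqrt{\eta_s}\bg_s$ we have $\eta_s\bg_s\bg_s^\transp=\wb{\featvec}_s\wb{\featvec}_s^\transp$, so $\wt{\bA}_t=\sum_{s\le t}\eta_s z_s\bg_s\bg_s^\transp+\alpha\bI=\wb{\featkermatrix}_t\selrmatrix_t\selrmatrix_t^\transp\wb{\featkermatrix}_t^\transp+\alpha\bI$, where $\selrmatrix_t\selrmatrix_t^\transp$ and $\selmatrix_t\selmatrix_t^\transp$ are diagonal with the \emph{same} support $\{s\le t: z_s=1\}$ but carry the entries $1$ and $1/\wt{p}_s^\gamma$ respectively.

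The right-hand inequality is then a one-line comparison of two diagonal matrices: for each retained $s$ we have $\wt{p}_{\min}^\gamma/\wt{p}_s^\gamma\le 1$ by definition of $\wt{p}_{\min}^\gamma=\min_{t\in[T]}\wt{p}_t^\gamma$, and also $\wt{p}_{\min}^\gamma\le 1$, hence $\wt{p}_{\min}^\gamma\selmatrix_t\selmatrix_t^\transp\preceq\selrmatrix_t\selrmatrix_t^\transp$ and $\wt{p}_{\min}^\gamma\alpha\bI\preceq\alpha\bI$; conjugating the first by $\wb{\featkermatrix}_t$ and adding the second gives $\wt{p}_{\min}^\gamma\bA_t^{\coldict_t}\preceq\wt{\bA}_t$. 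For the left-hand inequality it suffices to show $(1-\varepsilon)\bA_t\preceq\bA_t^{\coldict_t}$ and multiply by the nonnegative scalar $\wt{p}_{\min}^\gamma$. But this is the lower half of the spectral guarantee of Theorem~\ref{thm:online-squeak}(1): \sketchkons draws $z_s\sim\mathcal{B}(\wt{p}_s^\gamma)$ with $\wt{p}_s^\gamma=\max\{\min\{\beta\atau_{s,s},1\},\gamma\}\ge\min\{\beta\atau_{s,s},1\}$, i.e.\ it keeps each point with the same $\rho$-accurate estimated probability used by \onlinesqueak (Theorem~\ref{thm:online-squeak}(3)) inflated by the floor $\gamma$. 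Conditioned on the run of \onlinesqueak, the $z_s$ are independent Bernoullis whose probabilities still dominate the $\Theta(\tau_{s,s}\log(T/\delta)/\varepsilon^2)$ threshold that the proof of Theorem~\ref{thm:online-squeak} requires, so the reweighted sketch $\bA_t^{\coldict_t}$ remains a two-sided $(1\pm\varepsilon)$ spectral approximation of $\bA_t$ on the same $1-\delta$ event (oversampling only shrinks the variance). Chaining the two bounds yields $(1-\varepsilon)\wt{p}_{\min}^\gamma\bA_t\preceq\wt{p}_{\min}^\gamma\bA_t^{\coldict_t}\preceq\wt{\bA}_t$ simultaneously for all $t\in[T]$ with probability $1-\delta$.

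The only non-mechanical step is the last one — transferring Theorem~\ref{thm:online-squeak}(1) from \onlinesqueak's sampling to the $\gamma$-floored sampling of \sketchkons — and I expect this to be where the work is: one has to verify that raising each inclusion probability to $\wt{p}_s^\gamma\ge\min\{\beta\atau_{s,s},1\}$ does not interfere with the (adaptive) martingale argument behind Theorem~\ref{thm:online-squeak}, which it does not, since that argument only ever needs the probabilities to be at least $\min\{\beta\atau_{s,s},1\}$. Everything else — the support-matching rewrite of $\wt{\bA}_t$ and the PSD comparison of the two diagonal weight matrices — is elementary.
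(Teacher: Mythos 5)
Your proposal is correct and matches the paper's reasoning: the paper never proves the corollary in isolation, but the two steps you use --- the elementary diagonal comparison $\wt{p}_{\min}^\gamma\selmatrix_t\selmatrix_t^\transp\preceq\selrmatrix_t\selrmatrix_t^\transp$ for the right-hand bound and the $(1-\varepsilon)$-accuracy of Theorem~\ref{thm:online-squeak}(1) for the left-hand bound --- are exactly the two inequalities applied inline in the proof of Theorem~\ref{thm:sketch-ons-main-thm}. You also correctly identify (and correctly resolve) the one point needing care, namely that flooring the sampling probabilities at $\gamma$ only increases them above the threshold the martingale argument requires, so the spectral guarantee transfers.
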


We can now state the main result of this section.
Since for \sketchkons we are interested not only in regret minimization, but also
in space and time complexity, we do not consider the case $\sigma = 0$, because
when the function does not have any curvature, standard GD already achieves
the optimal regret of $\bigotime(\sqrt{T})$ \cite{zhu_online_2015} while requiring only $\bigotime(t)$
space and time per iteration.

\begin{theorem}\label{thm:sketch-ons-main-thm}
For any sequence of losses $\ell_t$ satisfying
Asm.\,\ref{ass:scalar-lipschitz}-\ref{ass:convexity}, let
$\sigma = \min_t \sigma_t$ and $\wb{\tau}_{\min} = \min_{t=1}^T \wb{\tau}_{t,t}$.
When $\eta_t \geq \sigma > 0$ for all $t$,
$\alpha \leq \sqrt{T}$, $\beta \geq 3\log(T/\delta)/\varepsilon^2$,
if we set $\eta_t = \sigma$ then w.p.\,$1-\delta$ the regret of
Alg.\,\ref{alg:ons-sketch}
satisfies
\begin{align}\label{eq:skons-regret}
\wt{R}_T \leq \alpha\normsmall{\bw^*}^2 + 2\frac{\deff^T\left(\alpha/(\sigma L^2)\right)\log(2 \sigma L^2 T)}{\sigma\max\{\gamma,\beta\wb{\tau}_{\min}\}}\CommaBin
\end{align}
and the algorithm runs in $\bigotime(\deff^t(\alpha)^2 + t^2\gamma^2)$ time and $\bigotime(\deff^t(\alpha)^2 + t^2\gamma^2)$ space
complexity for each iteration $t$.

\end{theorem}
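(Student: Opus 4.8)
The plan is to recognize that \sketchkons (Alg.\,\ref{alg:ons-sketch}) is just \exactkons run with the random, non-standard preconditioner sequence $\wt{\bA}_t = \alpha\bI + \sum_{s=1}^t \eta_s z_s\bg_s\bg_s^\transp$, and then to push the regret decomposition of Prop.\,\ref{prop:exact-ons-regret} and the bounding argument of Thm.\,\ref{thm:kernel-ons-regret} through this substitution, the only degradation coming from Cor.\,\ref{cor:skons-from-kors}. First I would note that, by the same manipulations behind Lem.\,\ref{lem:exact-kern-ons-u-reform}--\ref{lem:sketched-computable-reformulation}, $\wt{\bw}_t$ is exactly the ONS iterate for the preconditioner $\wt{\bA}_t$, so Prop.\,\ref{prop:exact-ons-regret} applies verbatim and gives $\wt{R}_T \leq \alpha\normsmall{\bw^*}^2 + \wt{R}_G + \wt{R}_D$ with $\wt{R}_G = \sum_{t=1}^T \bg_t^\transp\wt{\bA}_t^{-1}\bg_t$ and $\wt{R}_D = \sum_{t=1}^T (\wt{\bw}_t - \bw^*)^\transp(\wt{\bA}_t - \wt{\bA}_{t-1} - \sigma_t\bg_t\bg_t^\transp)(\wt{\bw}_t - \bw^*)$. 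The key observation is that the ONS/FTAL analysis only uses that the increments $\wt{\bA}_t - \wt{\bA}_{t-1}$ are PSD, so nothing breaks from the fact that $\bg_t$ always enters the update direction but is injected into $\wt{\bA}_t$ only when $z_t = 1$. The term $\wt{R}_D$ is then dispatched exactly as in Thm.\,\ref{thm:kernel-ons-regret}: with $\eta_t = \sigma$ and $z_t\in\{0,1\}$, the middle matrix $\wt{\bA}_t - \wt{\bA}_{t-1} - \sigma_t\bg_t\bg_t^\transp = (\sigma z_t - \sigma_t)\bg_t\bg_t^\transp$ is negative semi-definite whether $z_t = 0$ (it equals $-\sigma_t\bg_t\bg_t^\transp$) or $z_t = 1$ (it equals $(\sigma - \sigma_t)\bg_t\bg_t^\transp$, and $\sigma_t \geq \sigma$), so $\wt{R}_D \leq 0$.

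The substance is in $\wt{R}_G$. Using $\bg_t = \wb{\featvec}_t/\sqrt{\eta_t}$ I rewrite $\wt{R}_G = \sum_{t} \wb{\featvec}_t^\transp\wt{\bA}_t^{-1}\wb{\featvec}_t/\eta_t$, so I need to compare $\wt{\bA}_t^{-1}$ with $\bA_t^{-1}$. Chaining the two inequalities of Cor.\,\ref{cor:skons-from-kors} gives $\wt{\bA}_t \succeq (1-\varepsilon)\wt{p}_{\min}^\gamma\bA_t$, hence $\wt{\bA}_t^{-1} \preceq \tfrac{1}{(1-\varepsilon)\wt{p}_{\min}^\gamma}\bA_t^{-1}$, and therefore $\wt{R}_G \leq \tfrac{1}{(1-\varepsilon)\wt{p}_{\min}^\gamma}R_G$, where $R_G = \sum_{t}\wb{\tau}_{t,t}/\eta_t$ is the gradient term of exact \exactkons. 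It then remains to (i) lower bound $\wt{p}_{\min}^\gamma$ and (ii) reuse the bound on $R_G$. For (i), every sampling probability satisfies $\wt{p}_t^\gamma = \max\{\min\{\beta\atau_{t,t},1\},\gamma\} \geq \gamma$, and by Thm.\,\ref{thm:online-squeak}(3) $\atau_{t,t} \geq \wb{\tau}_{t,t}$, so also $\wt{p}_t^\gamma \geq \min\{\beta\wb{\tau}_{t,t},1\}$; minimizing over $t$ gives $\wt{p}_{\min}^\gamma \geq \max\{\gamma,\beta\wb{\tau}_{\min}\}$, the truncation at $1$ being harmless since there it forces $\wt{\bA}_t = \bA_t$ and the bound of Thm.\,\ref{thm:kernel-ons-regret} applies directly. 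For (ii), with $\eta_t = \sigma$ this is precisely the computation carried out in the proof of Thm.\,\ref{thm:kernel-ons-regret}: $R_G = \bdonl^T(\alpha)/\sigma$, and Lem.\,\ref{lem:bound-online-tau} together with the rescaling $\wb{\kermatrix}_T = \bD_T\kermatrix_T\bD_T$ and $|\dot{g}_i| \leq L$ yield $R_G \leq 2\deff^T(\alpha/(\sigma L^2))\log(2\sigma L^2 T)/\sigma$. Substituting (i)--(ii) into the bound on $\wt{R}_G$ produces \eqref{eq:skons-regret}, with the residual $1/(1-\varepsilon)$ absorbed into the leading constant.

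For the space and time complexity, \sketchkons keeps two data structures. The first is the black-box run of \onlinesqueak, which by Thm.\,\ref{thm:online-squeak} uses $\bigotime(\deff^t(\alpha)^2\polylog(T))$ space and $\wt{\bigotime}(\deff^t(\alpha)^2)$ time per step. The second is the unweighted auxiliary matrix $\bE_t = \selrmatrix_t^\transp\wb{\kermatrix}_t\selrmatrix_t + \alpha\bI$ of Lem.\,\ref{lem:sketched-computable-reformulation}, whose effective size is the number of accepted points $\sum_{s\leq t} z_s$. Since $z_s \sim \mathcal{B}(\wt{p}_s^\gamma)$ with $\wt{p}_s^\gamma = \max\{\min\{\beta\atau_{s,s},1\},\gamma\} \leq \beta\atau_{s,s} + \gamma$, a Freedman/Chernoff martingale bound exactly as in Thm.\,\ref{thm:online-squeak}(2) gives, w.h.p., $\sum_{s\leq t} z_s \leq 3\sum_{s\leq t}\wt{p}_s^\gamma = \bigotime(\beta\,\bdonl^t(\alpha) + \gamma t) = \bigotime(\deff^t(\alpha)\polylog(T) + \gamma t)$; block-inverse updates of $\bE_t^{-1}$ performed only when $\wt{\bA}_t$ changes then cost $\bigotime\big((\sum_{s\leq t} z_s)^2\big) = \bigotime(\deff^t(\alpha)^2\polylog(T) + \gamma^2 t^2)$ time and space per step, which combined with the \onlinesqueak cost gives the claimed $\bigotime(\deff^t(\alpha)^2 + \gamma^2 t^2)$. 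Finally, a union bound over the constantly many high-probability events (Thm.\,\ref{thm:online-squeak} / Cor.\,\ref{cor:skons-from-kors} and the concentration of $\sum_s z_s$), with $\delta$ rescaled by a constant, gives the ``w.p.\,$1-\delta$'' statement.

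The step I expect to be the main obstacle is Cor.\,\ref{cor:skons-from-kors} itself, i.e.\ showing that the \emph{unweighted}, $\gamma$-floored sketch $\wt{\bA}_t$ still sandwiches $\bA_t$ from below up to the factor $\wt{p}_{\min}^\gamma$, even though it is built from different coin flips and different (floored) probabilities than the weighted \onlinesqueak dictionary whose accuracy Thm.\,\ref{thm:online-squeak} certifies; the delicate part is reconciling the loss of the importance weights $1/\wt{p}_s$, the $\gamma$-floor, and the online-row-sampling martingale argument. Everything else is a faithful transcription of the exact-\exactkons analysis with $\wt{\bA}_t$ in place of $\bA_t$.
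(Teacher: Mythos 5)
Your proposal is correct and follows essentially the same route as the paper: apply the regret decomposition of Prop.\,\ref{prop:exact-ons-regret} to the sketched preconditioner, kill $R_D$ by noting that $(\eta_t z_t - \sigma_t)\bg_t\bg_t^\transp \preceq 0$ when $\eta_t = \sigma \leq \sigma_t$, and bound $R_G$ by comparing $\wt{\bA}_t$ to $\bA_t$ through the unweighted-vs-weighted sketch and the $\varepsilon$-accuracy of \onlinesqueak, losing exactly the factor $1/\wt{p}_{\min}^\gamma \leq 1/\max\{\gamma,\beta\wb{\tau}_{\min}\}$. The only cosmetic difference is that you invoke Cor.\,\ref{cor:skons-from-kors} as a black box where the paper re-derives the same chain of Loewner inequalities inline.
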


\textbf{Proof sketch:} Given these guarantees, we need to bound
$R_G$ and $R_D$. Bounding $R_D$ is straightforward, since by construction
\sketchkons adds at most $\eta_t\bg_t\bg_t^\transp$ to~$\wt{\bA}_t$ at each step.
To bound $R_G$ instead, we must take into account that an unweighted
$\wt{\bA}_t = \wb{\featkermatrix}_t\selrmatrix_t\selrmatrix_t^\transp\wb{\featkermatrix}_t^\transp + \alpha\bI$
can be up to $\wt{p}_{\min}$ distant from the weighted $\wb{\featkermatrix}_t\selmatrix_t\selmatrix_t^\transp\wb{\featkermatrix}_t^\transp$ for which we have guarantees. Hence the $\max\{\gamma,\beta\wb{\tau}_{\min}\}$ term
appearing at the denominator.

\section{Discussion}\label{sec:discussion}

\textbf{Regret guarantees.} From Eq.\,\ref{eq:skons-regret} we can see that
when $\wb{\tau}_{\min}$ is not too small, setting $\gamma = 0$ we recover the
guarantees of exact \exactkons. Since usually we do not know $\wb{\tau}_{\min}$, we can choose to set $\gamma > 0$, and as long as
$\gamma \geq 1/\polylog T$, we preserve a (poly)-logarithmic regret.

\textbf{Computational speedup.}
The time required to compute $\bk_{[t-1],t}$,
$k_{t,t}$, and $\bk_{[t-1],t}^\transp\bD_{t-1}\bb_{t-1}$ gives a minimum $\bigotime(t)$
per-step complexity. Note that $\wb{\kermatrix}_{t-1}\bb_{t-1}$ can also be computed
incrementally in $\bigotime(t)$ time.
Denoting the size of the dictionary at time $t$ as $B_t = \wt{\bigotime}(\deff(\alpha)_t + t\gamma)$,
computing $[\wt{\bb}_t]_{i}$ and $\bk_{[t-1],t}^\transp\bD_{t-1}\selrmatrix_{t-1}\bE_{t-1}^{-1}\selrmatrix_{t-1}\wb{\kermatrix}_{t-1}\bb_{t-1}$ requires an additional $\bigotime(B_t^2)$ time.
When $\gamma \leq \deff^t(\alpha)/t$, each iteration takes $\bigotime(\deff^t(\alpha)^2)$
to compute $\atau_{t,t}$ incrementally using \onlinesqueak,
$\bigotime(\deff^t(\alpha)^2)$ time to update $\wt{\bA}_t^{-1}$
and $\bigotime(\deff^t(\alpha)^2)$ time to compute $[\bb_t]_t$.
When $\gamma > \deff^t(\alpha)/t$, each iteration still takes $\bigotime(\deff^t(\alpha)^2)$
to compute $\atau_{t,t}$ using \onlinesqueak and
$\bigotime(t^2\gamma^2)$ time to update the inverse
and compute~$[\bb_t]_t$.
Therefore, in the case when $\wb{\tau}_{\min}$ is not too small,
our runtime is of the order $\bigotime(\deff^t( \alpha )^2 + t)$, which is
almost as small as the $\bigotime(t)$ runtime of GD but with the advantage of a second-order method logarithmic regret. 
Moreover, when
$\wb{\tau}_{\min}$ is small and we set a large $\gamma$, we can trade off
a $1/\gamma$ increase in regret for a $\gamma^2$ decrease in space and
time complexity when compared to exact \exactkons (e.g., setting
$\gamma = 1/10$ would correspond to a tenfold increase in regret, but a hundred-fold
reduction in computational complexity).

\textbf{Asymptotic behavior.} Notice however, that space and time complexity,
grow roughly with a term $\Omega(t\min_{s=1}^t \wt{p}_{s}) \sim \Omega(t\max\{\gamma,\beta\wb{\tau}_{\min}\})$,
so if this quantity does not
decrease over time, the computational cost of \sketchkons will remain large and
close to exact \exactkons. This is to be expected, since \sketchkons must always keep an accurate sketch in order to guarantee a logarithmic regret bound.
Note that \citet{luo_efficient_2016} took an opposite approach for LOCO, where they keep a
fixed-size sketch but possibly pay in regret, if this fixed size happens to be too small.
Since a non-logarithmic regret is achievable simply running vanilla GD, we rather opted for an adaptive sketch at the cost of space and time complexity.
In batch optimization,
where $\ell_t$ does not change over time, another possibility is to stop updating the solution once $\wb{\tau}_{\min}$
becomes too small. When $\bH_s$ is the Hessian of $\ell$ in $\bw_s$,
then the quantity $\bg_t^\transp\bH_t^{-1}\bg_t$, in the context of
Newton's method, is called \emph{Newton decrement} and it 
corresponds up to constant factors to $\wb{\tau}_{\min}$. Since a stopping condition based on Newton's decrement is directly related to the near-optimality of the current $\bw_t$ \cite{nesterov1994interior}, stopping when $\wb{\tau}_{\min}$ is small also provides guarantees about the quality of the solution.

\textbf{Sampling distribution.} Note that although $\gamma > 0$ means
that all columns have a small uniform chance of being selected for
inclusion in $\wt{\bA}_t$, this is \emph{not} equivalent to uniformly sampling
columns. It is rather a combination of a RLS-based sampling
to ensure that columns important to reconstruct
$\bA_t$ are selected and a threshold on the probabilities to avoid too much variance in the estimator.

\textbf{Biased estimator and results in expectation.} The random approximation $\wt{\bA}_t$ is biased, since
$\expectedvalue[\wb{\featkermatrix}_t\selrmatrix_t\selrmatrix_t^\transp\wb{\featkermatrix}_t^\transp] = \wb{\featkermatrix}_t\Diag(\{\wb{\tau}_{t,t}\})\wb{\featkermatrix}_t^\transp \neq \wb{\featkermatrix}_t\wb{\featkermatrix}_t^\transp$.
Another option would be to use a weighted and unbiased 
$\wt{\bA}'_t = \sum_{s=1}^t \eta_s z_s/\wt{p}_s\bg_s\bg_s^\transp$ approximation, used in \onlinesqueak and a common choice in matrix approximation methods, see e.g.,~\citealp{alaoui2014fast}.
Due to its unbiasedness, this variant would automatically
achieve the same logarithmic regret as exact \exactkons \textit{in expectation} (similar to the result obtained by~\citealp{luo_efficient_2016},
using Gaussian random projection in LOCO).
While any unbiased estimator, e.g., uniform sampling of~$\bg_t$, would achieve
this result, RLS-based sampling already provides strong reconstruction guarantees
sufficient to bound $R_G$. Nonetheless, the weights $1/\wt{p}_s$ may cause large variations in $\wt{\bA}_t$ over consecutive steps, thus leading to a large regret $R_D$ in high probability.

\textbf{Limitations of dictionary learning approaches and open problems.}
From the discussion above, it appears that a weighted, unbiased dictionary may not achieve high-probability logarithmic guarantee
 because of the high variance coming 
 from sampling. On the other hand, if we want to recover the regret guarantee, we may have to pay for it with a large dictionary. 
 This may actually be due to the analysis, the algorithm, or the setting. 
An important property of the dictionary learning approach used in \onlinesqueak is that it can only add \emph{but not remove} columns and potentially re-weight them.
 Notice that in the batch setting \cite{alaoui2014fast,calandriello_disqueak_2017}, the sampling of columns does not cause any issue
 and we can have strong learning guarantees in high probability with a small dictionary.
Alternative sketching methods such as Frequent Directions \citep[FD,][]{ghashami2016frequent} do \emph{create new atoms} as learning progresses.
By restricting to composing dictionaries from existing columns, 
we only have the degree of freedom of the weights of the columns. 
If we set the weights to have an unbiased estimate, 
we achieve an accurate $R_G$ but suffer a huge regret in $R_D$.
On the other hand, we can store the columns unweighted to have small $R_D$
but large $R_G$. This could be potentially fixed if we knew how to remove less
important columns from dictionary to gain some slack in~$R_D$.

We illustrate this problem with following simple scenario. The adversary always presents
to the learner the same point~$\bx$ (with associated $\featvec$), but for the loss it alternates between
$\ell_{2t}(\bw_t) = (C - \featvec^\transp\bw_t)^2$ on even steps
and $\ell_{2t+1}(\bw_t) = (-C - \featvec^\transp\bw_t)^2$ on odd steps.
Then, $\sigma_t = \sigma = 1/(8C^2)$, and we have a gradient that always points
in the same $\featvec$ direction, but switches sign at each step. The optimal solution in hindsight is asymptotically
$\bw = \bzero$ and let 
this be also our starting point~$\bw_0$.
We also set $\eta_t = \sigma$, since this is what ONS would do, and
$\alpha = 1$ for simplicity.

For this scenario, we can compute several useful
quantities in closed form, in particular, $R_G$ and $R_D$, 
\begin{small}
\begin{align*}
R_G & \leq \sum_{t=1}^T \frac{\dot{g}_t^2}{\sum_{s=1}^t\dot{g}_s^2\sigma + \alpha}
\leq \sum_{t=1}^T\frac{C^2}{C^2\sigma t + \alpha} \leq \bigotime(\log T),
\\
R_D & = \sum\nolimits_{s=1}^t (\eta_t - \sigma)(\bw_t^{\transp}\bg_t)^2 = 0.
\end{align*}
\end{small}Note that although the matrix $\bA_t$ is rank 1 at each time step, vanilla ONS
does not take advantage of this easy data, and would store it all with a $\bigotime(t^2)$ space
 in  KOCO.

As for the sketched versions of ONS, sketching using FD~\cite{luo_efficient_2016} would adapt to this situation, and only store a single copy of
$\bg_t =\bg$, achieving the desired regret with a much smaller space. Notice
that in this example, the losses $\ell_t$ are effectively strongly convex,
and even basic gradient descent with a stepsize $\eta_t = 1/t$ would achieve
logarithmic regret \cite{zhu_online_2015} with even smaller space.
On the other hand, we show how the dictionary-based sketching has difficulties in minimizing the regret bound from
Prop.\,\ref{prop:exact-ons-regret} in our simple scenario.
In particular, consider an arbitrary (possibly randomized) algorithm that is allowed
only to reweight atoms in the dictionary 
and not to create new ones (as FD). In our example, this translates to choosing
a schedule of weights $w_s$ and set
$\wt{\bA}_t = \sum_{s=1}^t w_s\wb{\featvec}_s\wb{\featvec}_s = W_t\wb{\featvec}\wb{\featvec}$
with total weight $W = W_T = \sum_{s=1}^T w_s$ and space complexity equal to the number of non-zero weights $B = |\{w_s \neq 0\}|$.
We can show that there is no schedule for this specific class of algorithms with good performance
due to the following three conflicting goals.
\vspace{-0.12in}
\begin{itemize}
\itemsep0em 
\item[(1)] To mantain $R_G$ small, $\sum_{s=1}^t w_s$ should be as large as possible, as early as possible.
\item[(2)] To mantain $R_D$ small, we should choose weights $w_t > 1$ as few times as possible, since we accumulate $\max\{w_t - 1,0\}$ regret every time.
\item[(3)] To mantain the space complexity small, we should choose only a few $w_t \neq 0$.
\end{itemize}
\vspace{-0.12in}

\todomiout{maybe make (2) more precise, since this is still from the bound on the instantaneous regret? }
To enforce goal (3), we must choose a schedule with no more than $B$ non-zero entries.
Given the budget $B$, to satisfy goal (2) we should use all the $B$ budget in order
to exploit as much as possible the $\max\{w_t - 1,0\}$ in $R_D$, or in other words
we should use exactly $B$ non-zero weights, and none of these should be smaller than 1. Finally,
to minimize $R_G$ we should raise the sum $\sum_{s=1}^t w_s$ as quickly as possible,
settling on a schedule where $w_1 = W - B$ and $w_s = 1$ for all the other $B$
weights. It easy to see that if we want logarithmic $R_G$, $W$ needs to grow as
$T$, but doing so with a logarithmic $B$ would make $R_D = T-B = \Omega(T)$.
Similarly, keeping $W=B$ in order to reduce $R_D$ would increase $R_G$.
In particular notice, that the issue does not go away even if we know the RLS
perfectly, because the same reasoning applies. This simple example suggests that dictionary-based sketching methods, which are very successful in batch scenarios, may actually fail in achieving logarithmic regret in online optimization.

\todomiout{confusing $\bw$ and $w$ notation}
This argument raises the question on how to design alternative sketching methods for the second-order KOCO. 
A first approach, discussed above, is to reduce the dictionary size dropping
columns that become less important later in the process,
without allowing the adversary to take advantage of this forgetting factor.
Another possibility is to deviate from the ONS approach and  $R_D + R_G$ regret decomposition.
Finally, as our counterexample in the simple scenario
hints, creating new atoms (either through projection or merging)
allows for better adaptivity, as shown by FD \cite{ghashami2016frequent} based
methods in LOCO.
However, the kernelization of FD does not appear to be straighforward.
The most recent step in this direction (in particular, for kernel PCA)
is only able to deal with finite feature expansions \cite{ghashami2016streaming}
and therefore its application to kernels is limited.

{\small
\vspace{-0.15in}
\paragraph{\small Acknowledgements}
\label{sec:Acknowledgements}
The research presented was supported by French Ministry of
Higher Education and Research, Nord-Pas-de-Calais Regional Council and French National Research Agency projects ExTra-Learn (n.ANR-14-CE24-0010-01) and BoB (n.ANR-16-CE23-0003) }

\setlength{\bibsep}{5.2pt plus 20em}

\bibliographystyle{icml2017}

\newpage

\onecolumn

\appendix

\makeatletter{}
\section{Preliminary results}\label{app:prem-res}
We begin with a generic linear algebra identity that is be used throughout
our paper.
\begin{proposition}\label{prop:split-for-efficiency}
For any $\bX \in \Real^{n \times m}$ matrix and $\alpha > 0$,
\begin{align*}
\bX\bX^\transp(\bX\bX^\transp + \alpha\bI)^{-1}
&= \bX(\bX^\transp\bX + \alpha\bI)^{-1}\bX^\transp
\end{align*}
and
\begin{align*}
(\bX\bX^\transp + \alpha\bI)^{-1}
&= \frac{1}{\alpha}\alpha\bI(\bX\bX^\transp + \alpha\bI)^{-1}\\
&= \frac{1}{\alpha}(\bX\bX^\transp - \bX\bX^\transp + \alpha\bI)(\bX\bX^\transp + \alpha\bI)^{-1}\\
&= \frac{1}{\alpha}(\bI - \bX\bX^\transp(\bX\bX^\transp + \alpha\bI)^{-1})\\
&= \frac{1}{\alpha}(\bI - \bX(\bX^\transp\bX + \alpha\bI)^{-1}\bX^\transp).
\end{align*}

\end{proposition}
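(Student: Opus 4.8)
The plan is to first prove the ``push-through'' identity in the first display, and then note that the chain of equalities in the second display is just algebraic bookkeeping that combines it with the trivial decomposition $\alpha\bI = (\bX\bX^\transp + \alpha\bI) - \bX\bX^\transp$.

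First I would record the elementary intertwining relation, obtained by distributing,
\begin{align*}
(\bX^\transp\bX + \alpha\bI)\bX^\transp = \bX^\transp\bX\bX^\transp + \alpha\bX^\transp = \bX^\transp(\bX\bX^\transp + \alpha\bI).
\end{align*}
Since $\alpha > 0$ and $\bX\bX^\transp$, $\bX^\transp\bX$ are positive semidefinite, both $\bX\bX^\transp + \alpha\bI$ and $\bX^\transp\bX + \alpha\bI$ are invertible; and if one applies the identity in the RKHS with $\bX = \featkermatrix_t$ having infinitely many rows, they remain boundedly invertible, being bounded below by $\alpha\bI$ — this is the only point where a little care is needed. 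Multiplying the intertwining relation on the left by $(\bX^\transp\bX + \alpha\bI)^{-1}$ and on the right by $(\bX\bX^\transp + \alpha\bI)^{-1}$ gives $(\bX^\transp\bX + \alpha\bI)^{-1}\bX^\transp = \bX^\transp(\bX\bX^\transp + \alpha\bI)^{-1}$; multiplying this on the left by $\bX$ then yields $\bX(\bX^\transp\bX + \alpha\bI)^{-1}\bX^\transp = \bX\bX^\transp(\bX\bX^\transp + \alpha\bI)^{-1}$, which is the claimed first identity.

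For the second display I would verify the four lines one at a time: the first line is the trivial rewriting $(\bX\bX^\transp + \alpha\bI)^{-1} = \frac{1}{\alpha}\alpha\bI(\bX\bX^\transp + \alpha\bI)^{-1}$; the second substitutes $\alpha\bI = \bX\bX^\transp - \bX\bX^\transp + \alpha\bI$ inside the product; the third cancels $(\bX\bX^\transp + \alpha\bI)(\bX\bX^\transp + \alpha\bI)^{-1} = \bI$; and the fourth plugs in the already-proved first identity for the term $\bX\bX^\transp(\bX\bX^\transp + \alpha\bI)^{-1}$. I do not expect any genuine obstacle: every step is a one-line manipulation, and the only thing worth flagging is the invertibility of the two regularized Gram operators, which is immediate from $\alpha > 0$.
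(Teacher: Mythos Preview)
Your proposal is correct and matches the paper. The paper actually gives no separate proof of this proposition: the second display \emph{is} its own proof, written as a chain of equalities, and the first identity is simply asserted. Your push-through argument for the first identity is the standard one and fills in exactly the step the paper leaves implicit; your line-by-line verification of the second display is precisely how the paper intends it to be read.
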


\begin{proposition}\label{prop:norm-to-lowner-guar}
For any matrix or linear operator $\bX$, if a selection matrix $\selmatrix$
satisfies
\begin{align*}
    \normsmall{(\bX\bX^\transp + \alpha\bI)^{-1/2}(\bX\bX^\transp - \bX\selmatrix\selmatrix^\transp\bX^\transp)(\bX\bX^\transp + \alpha\bI)^{-1/2}} \leq \varepsilon,
\end{align*}
we have
\begin{align*}
(1-\epsilon)\bX_t\bX_t^\transp - \varepsilon\alpha\bI
\preceq \bX_t\selmatrix_t\selmatrix_t^\transp\bX_t^\transp
\preceq (1+\epsilon)\bX_t\bX_t^\transp + \varepsilon\alpha\bI.
\end{align*}
\end{proposition}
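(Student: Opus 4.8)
The plan is to reduce the two Loewner inequalities to the operator-norm hypothesis by conjugating with $\bM^{1/2}$, where $\bM := \bX\bX^\transp + \alpha\bI$. First I would note that since $\alpha > 0$ we have $\bM \succeq \alpha\bI$, so $\bM$ is boundedly invertible and, by functional calculus, $\bM^{1/2}$ and $\bM^{-1/2}$ are well-defined bounded self-adjoint operators (this is the one place where the infinite-dimensional nature of $\rkhs$ needs any attention). Set $\bE := \bM^{-1/2}(\bX\bX^\transp - \bX\selmatrix\selmatrix^\transp\bX^\transp)\bM^{-1/2}$, which is self-adjoint because both $\bX\bX^\transp - \bX\selmatrix\selmatrix^\transp\bX^\transp$ and $\bM^{-1/2}$ are. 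The hypothesis is precisely $\normsmall{\bE} \leq \varepsilon$, and for a self-adjoint operator this is equivalent, via the spectral theorem, to the sandwich $-\varepsilon\bI \preceq \bE \preceq \varepsilon\bI$.

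Next I would conjugate this sandwich by $\bM^{1/2}$, using the elementary monotonicity fact that $\bA \preceq \bB$ implies $\bC\bA\bC^\transp \preceq \bC\bB\bC^\transp$ for any bounded $\bC$ (here $\bC = \bM^{1/2}$, self-adjoint), together with the identity $\bM^{1/2}\bE\bM^{1/2} = \bX\bX^\transp - \bX\selmatrix\selmatrix^\transp\bX^\transp$. This gives
\[
-\varepsilon(\bX\bX^\transp + \alpha\bI) \preceq \bX\bX^\transp - \bX\selmatrix\selmatrix^\transp\bX^\transp \preceq \varepsilon(\bX\bX^\transp + \alpha\bI).
\]
Rearranging is then pure bookkeeping: the right inequality yields $\bX\selmatrix\selmatrix^\transp\bX^\transp \succeq (1-\varepsilon)\bX\bX^\transp - \varepsilon\alpha\bI$, and the left inequality yields $\bX\selmatrix\selmatrix^\transp\bX^\transp \preceq (1+\varepsilon)\bX\bX^\transp + \varepsilon\alpha\bI$, which together are exactly the claim.

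The only delicate point — the \emph{main obstacle}, such as it is — is justifying the manipulations with $\bM^{\pm 1/2}$ beyond finite dimensions: one has to check that $\bM^{1/2}\bM^{-1/2} = \bI$ and that the operator-norm-to-Loewner equivalence for self-adjoint operators survives in an infinite-dimensional Hilbert space, both of which follow from $\alpha > 0$ and the spectral theorem. Everything else is the two-line conjugation above. Equivalently, one may run the whole argument at the level of quadratic forms — the hypothesis amounts to $|\bv^\transp(\bX\bX^\transp - \bX\selmatrix\selmatrix^\transp\bX^\transp)\bv| \leq \varepsilon\,\bv^\transp(\bX\bX^\transp + \alpha\bI)\bv$ for every $\bv$ — and rearranging this scalar inequality gives the two Loewner bounds directly.
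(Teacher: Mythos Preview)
Your argument is correct and is exactly the standard two-line derivation one would expect: convert the operator-norm bound on the self-adjoint $\bE$ to a Loewner sandwich $-\varepsilon\bI \preceq \bE \preceq \varepsilon\bI$, conjugate by $\bM^{1/2}$, and rearrange. Note that the paper itself states this proposition as a preliminary result without proof, so there is no alternative argument to compare against; your write-up fills that gap cleanly, and the care you take about $\bM^{\pm 1/2}$ in the infinite-dimensional case (via $\alpha>0$ and the spectral theorem) is appropriate given that the statement is meant to apply to operators on $\rkhs$.
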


\begin{proposition}\label{prop:accuracy-primal-dual-eq}
Let $\kermatrix_t = \bU\bsym{\Lambda}\bU^\transp$ and $\featkermatrix_t = \bV\bsym{\Sigma}\bU^\transp$, then
\begin{align*}
&\normsmall{(\featkermatrix_t\featkermatrix_t^\transp + \alpha\bI)^{-1/2}\featkermatrix_t(\bI - \selmatrix_s\selmatrix_s^\transp)\featkermatrix_t^\transp(\featkermatrix_t\featkermatrix_t^\transp + \alpha\bI)^{-1/2}}\\
&= \normsmall{(\bsym{\Sigma}\bsym{\Sigma}^\transp + \alpha\bI)^{-1/2}\bsym{\Sigma}\bU^\transp(\bI - \selmatrix_s\selmatrix_s^\transp)\bU\bsym{\Sigma}^\transp(\bsym{\Sigma}\bsym{\Sigma}^\transp + \alpha\bI)^{-1/2}}\\
&= \normsmall{(\bsym{\Lambda} + \alpha\bI)^{-1/2}\bsym{\Lambda}^{1/2}\bU^\transp(\bI - \selmatrix_s\selmatrix_s^\transp)\bU(\bsym{\Lambda}^{1/2})^\transp(\bsym{\Lambda} + \alpha\bI)^{-1/2}}\\
&= \normsmall{(\kermatrix_t + \alpha\bI)^{-1/2}\kermatrix_t^{1/2}(\bI - \selmatrix_s\selmatrix_s^\transp)\kermatrix_t^{1/2}(\kermatrix_t + \alpha\bI)^{-1/2}}.
\end{align*}
\end{proposition}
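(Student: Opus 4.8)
The plan is to rewrite each of the four expressions so that they all reduce to the operator norm of a single common ``core'' matrix built from $\bsym{\Sigma}$ (equivalently $\bsym{\Lambda}$), $\bU$, and $\selmatrix_s$, and to establish the successive equalities by repeatedly using the elementary fact that conjugation by a matrix with orthonormal columns preserves the operator norm. Concretely, I would first record the auxiliary identity that whenever $\bQ^\transp\bQ = \bI$ (orthonormal columns, not necessarily square) one has $\normsmall{\bQ\bM\bQ^\transp} = \normsmall{\bM}$ for every conformable $\bM$: the inequality $\normsmall{\bQ\bM\bQ^\transp} \le \normsmall{\bM}$ follows from submultiplicativity together with $\normsmall{\bQ} = \normsmall{\bQ^\transp} = 1$, while the reverse follows from $\normsmall{\bM} = \normsmall{\bQ^\transp(\bQ\bM\bQ^\transp)\bQ} \le \normsmall{\bQ\bM\bQ^\transp}$. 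I would also record the relation $\bsym{\Lambda} = \bsym{\Sigma}^\transp\bsym{\Sigma} = \bsym{\Sigma}\bsym{\Sigma}^\transp$ implied by $\kermatrix_t = \featkermatrix_t^\transp\featkermatrix_t = \bU\bsym{\Sigma}^\transp\bsym{\Sigma}\bU^\transp$, so that $\bsym{\Sigma} = \bsym{\Lambda}^{1/2}$ and $(\bsym{\Sigma}\bsym{\Sigma}^\transp + \alpha\bI)^{-1/2} = (\bsym{\Lambda} + \alpha\bI)^{-1/2}$.

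For the first equality (line~1 $=$ line~2), I would substitute $\featkermatrix_t = \bV\bsym{\Sigma}\bU^\transp$ and use that the columns of $\bV$ diagonalize $\featkermatrix_t\featkermatrix_t^\transp = \bV\bsym{\Sigma}\bsym{\Sigma}^\transp\bV^\transp$. The main point is that the resolvent $(\featkermatrix_t\featkermatrix_t^\transp + \alpha\bI)^{-1/2}$, when applied to $\featkermatrix_t$, collapses the inert directions orthogonal to the range of $\bV$: extending to a full orthonormal basis $[\bV \, \bV_\perp]$ gives $(\featkermatrix_t\featkermatrix_t^\transp + \alpha\bI)^{-1/2} = \bV(\bsym{\Sigma}\bsym{\Sigma}^\transp + \alpha\bI)^{-1/2}\bV^\transp + \alpha^{-1/2}\bV_\perp\bV_\perp^\transp$, and since $\bV_\perp^\transp\featkermatrix_t = 0$ one obtains $(\featkermatrix_t\featkermatrix_t^\transp + \alpha\bI)^{-1/2}\featkermatrix_t = \bV(\bsym{\Sigma}\bsym{\Sigma}^\transp + \alpha\bI)^{-1/2}\bsym{\Sigma}\bU^\transp$. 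The whole feature-space expression therefore equals $\bV\,\bM\,\bV^\transp$ with $\bM = (\bsym{\Sigma}\bsym{\Sigma}^\transp + \alpha\bI)^{-1/2}\bsym{\Sigma}\bU^\transp(\bI - \selmatrix_s\selmatrix_s^\transp)\bU\bsym{\Sigma}^\transp(\bsym{\Sigma}\bsym{\Sigma}^\transp + \alpha\bI)^{-1/2}$, and applying the auxiliary fact with $\bQ = \bV$ strips the $\bV$ factors, yielding exactly line~2.

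For line~2 $=$ line~3 I would simply substitute $\bsym{\Sigma} = \bsym{\Lambda}^{1/2}$ and $\bsym{\Sigma}\bsym{\Sigma}^\transp = \bsym{\Lambda}$, which is a purely notational rewrite. For the final equality (line~3 $=$ line~4) I would expand the kernel-space expression using $\kermatrix_t = \bU\bsym{\Lambda}\bU^\transp$, hence $\kermatrix_t^{1/2} = \bU\bsym{\Lambda}^{1/2}\bU^\transp$ and $(\kermatrix_t + \alpha\bI)^{-1/2} = \bU(\bsym{\Lambda} + \alpha\bI)^{-1/2}\bU^\transp$; telescoping the interior $\bU^\transp\bU = \bI$ factors rewrites the expression as $\bU\,\bN\,\bU^\transp$, where $\bN$ is precisely the argument of line~3, and since $\bU$ is square orthogonal the auxiliary fact again removes the $\bU$ factors.

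The main obstacle is the first step, because $\bV$ has orthonormal columns but is \emph{not} square: the feature space $\Real^D$ is high- or infinite-dimensional and $\bV$ spans only the $t$-dimensional range of $\featkermatrix_t$. Care is therefore needed to argue that the $\alpha$-regularized inverse square root acts correctly on the complementary subspace and that the $\bV_\perp$ block genuinely annihilates $\featkermatrix_t$; once the expression is written in the clean form $\bV\bM\bV^\transp$, the operator-norm invariance under isometric conjugation does the rest. The remaining two equalities are essentially bookkeeping, relying only on $\bU$ being a genuine orthogonal matrix and on $\bsym{\Sigma}$ coinciding with $\bsym{\Lambda}^{1/2}$.
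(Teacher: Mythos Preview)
Your proposal is correct and follows exactly the route that the paper's displayed chain of equalities encodes: the paper gives no separate proof beyond those four lines, and you have supplied precisely the missing justification (unitary invariance of the operator norm, the thin SVD identification $\bsym{\Sigma}\bsym{\Sigma}^\transp=\bsym{\Lambda}$, and the careful treatment of the non-square $\bV$ via the complementary block $\bV_\perp$). Nothing further is needed.
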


We also use the following concentration inequality for martingales.
        \begin{proposition}[\citealp{tropp2011freedman}, Thm.\,1.2]\label{prop:matrix-freedman}
Consider a matrix martingale $\{ \:Y_k : k = 0, 1, 2, \dots \}$ whose values are self-adjoint matrices with dimension $d$ and let $\{ \:X_k : k = 1, 2, 3, \dots \}$ be the difference sequence.  Assume that the difference sequence is uniformly bounded in the sense that
\begin{align*}
 \normsmall{\:X_k}_2  \leq R
\quad\text{almost surely}
\quad\text{for $k = 1, 2, 3, \dots$}.
\end{align*}
Define the predictable quadratic variation process of the martingale as
\begin{align*}
\:{W}_k := \sum_{j=1}^k \expectedvalue \left[ \:X_j^2 \condbar \{\:X_{s}\}_{s=0}^{j-1} \right],
\quad\text{for $k = 1, 2, 3, \dots$}.
\end{align*}
Then, for all $\varepsilon \geq 0$ and $\sigma^2 > 0$,
\begin{align*}
\probability\left( \exists k \geq 0 : \normsmall{\:Y_k}_2 \geq \varepsilon \ \cap\ 
        \normsmall{ \:W_{k} } \leq \sigma^2 \right)
	\leq 2d \cdot \exp \left\{ - \frac{ \varepsilon^2/2 }{\sigma^2 + R\varepsilon/3} \right\}.
\end{align*}
\end{proposition}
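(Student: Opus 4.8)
The plan is to prove the bound by the matrix Laplace-transform method, building a nonnegative supermartingale out of the trace moment generating function. First I would reduce the two-sided statement to a one-sided one. Since each $\bY_k$ is self-adjoint, $\normsmall{\bY_k} \geq \varepsilon$ holds exactly when $\lambda_{\max}(\bY_k) \geq \varepsilon$ or $\lambda_{\max}(-\bY_k) \geq \varepsilon$, and $\{-\bY_k\}$ is again a matrix martingale with the same difference bound $R$ and the same predictable variation $\bW_k$. A union bound over these two cases produces the factor $2$ in front of $d$, so it suffices to establish $\probability(\exists k \geq 0 : \lambda_{\max}(\bY_k) \geq \varepsilon,\ \normsmall{\bW_k} \leq \sigma^2) \leq d\exp(-\tfrac{\varepsilon^2/2}{\sigma^2 + R\varepsilon/3})$.

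For the core step, fix $\theta > 0$, set $g(\theta) = (e^{\theta R} - \theta R - 1)/R^2$, and define $S_k = \Tr\exp(\theta\bY_k - g(\theta)\bW_k)$, with $S_0 = \Tr\exp(\bzero) = d$ since $\bY_0 = \bW_0 = \bzero$. The crucial structural fact is that $\bW_k$ is predictable: each increment $\bW_k - \bW_{k-1} = \expectedvalue_k[\bX_k^2]$ is measurable given the past. Writing $\bH = \theta\bY_{k-1} - g(\theta)\bW_k$, which is then fixed given the past, we have $\theta\bY_k - g(\theta)\bW_k = \bH + \theta\bX_k$, and taking the conditional expectation I would invoke Lieb's concavity theorem (concavity of $\bA \mapsto \Tr\exp(\bH + \log\bA)$) with Jensen's inequality to get $\expectedvalue_k[S_k] \leq \Tr\exp(\bH + \log\expectedvalue_k[e^{\theta\bX_k}])$. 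The bounded-increment estimate $e^{\theta\bX_k} \preceq \bI + \theta\bX_k + g(\theta)\bX_k^2$ (the matrix transfer of the scalar inequality $e^{\theta x} \leq 1 + \theta x + g(\theta)x^2$, valid whenever $\lambda_{\max}(\bX_k) \leq R$), together with $\expectedvalue_k[\bX_k] = \bzero$ and the operator monotonicity of the logarithm, yields $\log\expectedvalue_k[e^{\theta\bX_k}] \preceq g(\theta)\expectedvalue_k[\bX_k^2] = g(\theta)(\bW_k - \bW_{k-1})$. Substituting and using the monotonicity of the trace exponential, the predictable term cancels and $\expectedvalue_k[S_k] \leq \Tr\exp(\theta\bY_{k-1} - g(\theta)\bW_{k-1}) = S_{k-1}$, so $\{S_k\}$ is a nonnegative supermartingale.

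The remaining step is an optional-stopping argument followed by a scalar optimization. Let $\kappa$ be the first index $k$ with $\lambda_{\max}(\bY_k) \geq \varepsilon$ and $\normsmall{\bW_k} \leq \sigma^2$; this is a stopping time, and the supermartingale property gives $\expectedvalue[S_{\kappa\wedge n}] \leq d$ for every $n$. On $\{\kappa \leq n\}$, the inequalities $\Tr\exp(\bM) \geq \exp(\lambda_{\max}(\bM))$ and Weyl's bound $\lambda_{\max}(\theta\bY_\kappa - g(\theta)\bW_\kappa) \geq \theta\lambda_{\max}(\bY_\kappa) - g(\theta)\normsmall{\bW_\kappa} \geq \theta\varepsilon - g(\theta)\sigma^2$ (using $\bW_\kappa \succeq \bzero$) give $S_{\kappa\wedge n} \geq \exp(\theta\varepsilon - g(\theta)\sigma^2)$, so $\probability(\kappa \leq n) \leq d\exp(g(\theta)\sigma^2 - \theta\varepsilon)$; letting $n\to\infty$ bounds the probability of the target event. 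Finally I would optimize over $\theta$ using the standard estimate $g(\theta) \leq \tfrac{\theta^2/2}{1 - R\theta/3}$ for $0 < \theta < 3/R$ and the choice $\theta = \varepsilon/(\sigma^2 + R\varepsilon/3)$, which delivers the Bernstein-type exponent $-\tfrac{\varepsilon^2/2}{\sigma^2 + R\varepsilon/3}$.

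The main obstacle is pushing the conditional expectation through the trace exponential in the supermartingale step: unlike the scalar case, $\exp(\bH + \theta\bX_k)$ does not factor as a product, and it is exactly Lieb's concavity theorem that makes $\expectedvalue_k[\Tr\exp(\bH + \theta\bX_k)] \leq \Tr\exp(\bH + \log\expectedvalue_k[e^{\theta\bX_k}])$ legitimate. Once this is in place, the semidefinite increment bound, the predictability of $\bW_k$, and the final scalar optimization over $\theta$ are all routine.
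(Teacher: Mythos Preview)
Your proof sketch is correct and follows exactly the argument of \citet{tropp2011freedman}, which is the source the paper cites for this proposition. The paper itself does not give a proof of this statement; it is listed in the preliminary results (Appendix~\ref{app:prem-res}) as a quoted external result and is only \emph{used} (in Step~2 of the proof of Thm.~\ref{thm:online-squeak}) rather than reproved. So there is nothing to compare against in the paper, but your Lieb-concavity\,/\,supermartingale\,/\,optional-stopping outline is the standard derivation and is sound.
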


\begin{proposition}[\citealp{calandriello_disqueak_2017}, App.\,D.4]\label{prop:hoeff-sum-bern}
Let $\{z_s\}_{s=1}^t$ be independent
Bernoulli random variables, each with success probability $p_s$,
and denote their sum as $d = \sum_{s=1}^t p_s \geq 1$. Then,\footnote{This is a simple variant of Chernoff bound where the Bernoulli random variables are not identically distributed.}
\begin{align*}
\probability\left(\sum_{s=1}^t z_s \geq 3d\right) \leq \exp\{-3d(3d - (\log(3d)+1))\} \leq \exp\{-2d\}
\end{align*}
\end{proposition}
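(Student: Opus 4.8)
The plan is to prove this by the standard exponential-moment (Chernoff) method, since $\sum_{s} z_s$ is a Poisson-binomial random variable with mean exactly $d = \sum_s p_s$, and we want to control its upper tail at the level $3d = 3\,\mathbb{E}[\sum_s z_s]$. First I would apply the exponential Markov inequality: for any $\lambda > 0$,
\[
\probability\Big(\sum_{s=1}^t z_s \ge 3d\Big) = \probability\Big(e^{\lambda \sum_s z_s} \ge e^{3d\lambda}\Big) \le e^{-3d\lambda}\,\mathbb{E}\Big[e^{\lambda\sum_s z_s}\Big].
\]
By independence the moment generating function factorizes, and for each Bernoulli $\mathbb{E}[e^{\lambda z_s}] = 1 + p_s(e^\lambda - 1)$, so using $1 + x \le e^x$,
\[
\mathbb{E}\Big[e^{\lambda\sum_s z_s}\Big] = \prod_{s=1}^t \big(1 + p_s(e^\lambda - 1)\big) \le \exp\Big((e^\lambda - 1)\sum_s p_s\Big) = e^{(e^\lambda-1)d}.
\]
The key feature of this step is that all dependence on the individual $p_s$ collapses into the single quantity $d$, which is precisely what lets the final bound be stated purely in terms of $d$ despite the variables being non-identically distributed.

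Combining the two displays gives $\probability(\sum_s z_s \ge 3d) \le \exp\{-3d\lambda + (e^\lambda - 1)d\}$ for every $\lambda>0$. I would then choose $\lambda$ to drive the exponent down, the natural choice being to match $e^\lambda$ to the deviation ratio $3$ (that is, $\lambda = \log 3$, the minimizer of the exponent), and simplify the resulting expression into the stated intermediate closed form. The final reduction to $\exp\{-2d\}$ is then a one-variable matter: it suffices to check that the scalar coefficient multiplying $d$ in the exponent is at least $2$ for all $d \ge 1$, which follows because that scalar function is monotone in $d$ and already exceeds the threshold at the boundary $d=1$ — this is exactly where the hypothesis $d \ge 1$ is used.

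The main obstacle I anticipate is the algebraic bookkeeping in the middle step rather than any conceptual difficulty: one must extract the intermediate form, confirm that no residual $d$-dependence survives beyond the boundary scalar inequality, and verify that the chosen $\lambda$ together with the $1+x\le e^x$ relaxation is tight enough to dominate $\exp\{-2d\}$ for all admissible $d$.

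As an alternative route that more directly matches the logarithmic terms appearing in the stated intermediate bound, I would instead control the probability by an $m$-subset union bound with $m = \lceil 3d\rceil$: the event $\{\sum_s z_s \ge m\}$ forces some $m$-subset to succeed simultaneously, so its probability is at most the $m$-th elementary symmetric polynomial $e_m(\bp) \le (\sum_s p_s)^m/m! = d^m/m!$. Applying Stirling's approximation to $m!$ naturally produces the $\log(3d)$ and $-1$ contributions in the exponent, after which the argument again reduces to a monotone scalar inequality in $d$ valid for $d \ge 1$.
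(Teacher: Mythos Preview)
The paper does not actually prove this proposition; it is quoted as a preliminary result from \citet{calandriello_disqueak_2017}. Your Chernoff route is the standard one, but the final numerical step does not close. With $\lambda=\log 3$ the exponent you obtain is $d(e^{\lambda}-1-3\lambda)=d(2-3\log 3)$, so the sharpest bound the method delivers is $\exp\{-(3\log 3-2)d\}\approx\exp\{-1.296\,d\}$. The ``scalar coefficient multiplying $d$'' that you plan to check is therefore the constant $3\log 3-2\approx 1.296$; it does not depend on $d$ and is \emph{not} at least $2$. Your alternative route via $e_m(\mathbf p)\le d^m/m!$ with $m\approx 3d$ and Stirling is weaker still: it yields $(e/3)^{3d}=\exp\{3d(1-\log 3)\}\approx\exp\{-0.296\,d\}$. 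Neither path produces the printed intermediate form $\exp\{-3d(3d-\log(3d)-1)\}$, whose exponent is \emph{quadratic} in $d$ and so cannot possibly bound the tail of a sum of bounded random variables.

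In fact the inequality as printed appears to be in error: the large-deviation rate of a Poisson--binomial sum at level $3d$ is exactly $3\log 3-2$, so already for moderate $d$ the true probability exceeds $e^{-2d}$ (for instance $\probability(\mathrm{Bin}(50,0.1)\ge 15)\approx 7\times 10^{-5}>e^{-10}$ with $d=5$). What your Chernoff argument \emph{does} correctly establish is $\probability\big(\sum_s z_s\ge 3d\big)\le\exp\{-(3\log 3-2)d\}$, and this weaker constant is all that is actually needed where the paper invokes the proposition (Step~3 in the proof of Theorem~\ref{thm:online-squeak}), after absorbing the change into the choice of~$\beta$.
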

 
\makeatletter{}
\section{Proofs for Section~\ref{sec:exact-kern-ons}}\label{app:exact-kern-ons}

\begin{proof}[Proof of Lem.~\ref{lem:exact-kern-ons-u-reform}]
We begin by applying the definition of $\bu_{t+1}$ and collecting $\bA_t^{-1}$,
which can always be done since, for $\alpha > 0$, $\bA_t$ is invertible,
\begin{align*}
&\bu_{t+1} = \bw_t - \bA_t^{-1}\bg_t
=\bA_t^{-1}(\bA_t\bw_t - \bg_t).
\end{align*}
We focus now on the last term and use the definition of $\bA_t$,
\begin{align*}
\bA_t\bw_t - \bg_t &= \bA_{t-1}\bw_t + \eta_t\bg_t\bg_t^\transp\bw_t- \bg_t\\
&= \bA_{t-1}\bu_t - \bA_{t-1}\br_t + (\sqrt{\eta_t}\bg_t^\transp\bw_t - 1/\sqrt{\eta_t})\wb{\featvec}_t.
\end{align*}
Looking at $\bA_{t-1}\br_t$ and using the assumption $\dot{g}_t \neq 0$,
\begin{align*}
\bA_{t-1}\br_t &= \frac{h(\featvec_t^\transp\bu_t)}{\featvec_t^\transp\bA_{t-1}^{-1}\featvec_t}\bA_{t-1}\bA_{t-1}^{-1}\featvec_t\\
&=\frac{h(\featvec_t^\transp\bu_t)}{\featvec_t^\transp\bA_{t-1}^{-1}\featvec_t}\frac{\dot{g}_t^2\eta_t}{\dot{g}_t^2\eta_t}\featvec_t\\
&=\frac{\dot{g}_t\sqrt{\eta_t}h(\featvec_t^\transp\bu_t)}{\wb{\featvec}_t^\transp\bA_{t-1}^{-1}\wb{\featvec}_t}\wb{\featvec}_t.
\end{align*}
Putting together all three terms, and using the fact that
$\bg_t^\transp\bw_t = \dot{g}_t\featvec_t\bw_t = \dot{g}_t\wh{y}_t$
and denoting $b_t = [\bb_t]_t$ we have
\begin{align*}
\bu_{t+1} &= \bA_t^{-1}(\bA_t\bw_t - \bg_t)\\
&=\bA_t^{-1}(\bA_{t-1}\bu_t + b_t\wb{\featvec}_t)\\
&= \bA_t^{-1}(\bA_{t-1}(\bw_{t-1} - \bA_{t-1}^{-1}\bg_{t-1}) + b_t\wb{\featvec}_t)\\
&= \bA_t^{-1}(\bA_{t-1}\bw_{t-1} - \bg_{t-1} + b_t\wb{\featvec}_t)\\
&= \bA_t^{-1}(\bA_{t-2}\bw_{t-2} - \bg_{t-2} + b_{t-1}\wb{\featvec}_{t-1} + b_t\wb{\featvec}_t)\\
&= \bA_t^{-1}(\bA_{0}\bw_{0} + \sum\nolimits_{s=1}^{t}b_s\wb{\featvec}_s).
\end{align*}
\end{proof}

\begin{proof}[Proof of Lem.~\ref{lem:exact-computable-reformulation}]
Throughout this proof, we make use of the linear algebra identity from
Prop.\,\ref{prop:split-for-efficiency}.
We begin with the reformulation of $[\bb_t]_t$. In particular, the only term that we
need to reformulate is
\begin{align*}
&\wb{\featvec}_t\bA_{t-1}^{-1}\wb{\featvec}_t
= \wb{\featvec}_t(\wb{\featkermatrix}_{t-1}\wb{\featkermatrix}_{t-1}^\transp + \alpha\bI)^{-1}\wb{\featvec}_t\\
&= \frac{1}{\alpha}\wb{\featvec}_t(\bI - \wb{\featkermatrix}_{t-1}(\wb{\featkermatrix}_{t-1}^\transp\wb{\featkermatrix}_{t-1} + \alpha\bI)^{-1}\wb{\featkermatrix}_{t-1}^\transp)\wb{\featvec}_t\\
&= \frac{1}{\alpha}(\wb{\featvec}_t^\transp\wb{\featvec}_t - \wb{\featvec}_t^\transp\wb{\featkermatrix}_{t-1}(\wb{\featkermatrix}_{t-1}^\transp\wb{\featkermatrix}_{t-1} + \alpha\bI)^{-1}\wb{\featkermatrix}_{t-1}^\transp\wb{\featvec}_t)\\
&= \frac{1}{\alpha}(k_{t,t} - \wb{\bk}_{[t-1],t}^\transp(\wb{\kermatrix}_{t-1} + \alpha\bI)^{-1}\wb{\bk}_{[t-1],t}).
\end{align*}
For $\wb{y}_t$, we have
\begin{align*}
\wb{y}_{t} &= \featvec_{t}^\transp\bu_{t} = \featvec_{t}^\transp\bA_{t-1}^{-1}\wb{\featkermatrix}_{t-1}\bb_{t-1}\\
&= \featvec_{t}^\transp(\wb{\featkermatrix}_{t-1}\wb{\featkermatrix}_{t-1}^\transp + \alpha\bI)^{-1}\wb{\featkermatrix}_{t-1}\bb_{t-1}\\
&= \frac{1}{\alpha}\featvec_{t}^\transp(\bI - \wb{\featkermatrix}_{t-1}(\wb{\featkermatrix}_{t-1}^\transp\wb{\featkermatrix}_{t-1} + \alpha\bI)^{-1}\wb{\featkermatrix}_{t-1}^\transp)\wb{\featkermatrix}_{t-1}\bb_{t-1}\\
&= \frac{1}{\alpha}\featvec_{t}^\transp\featkermatrix_{t-1}\bD_{t-1}(\bb_{t-1} - (\wb{\kermatrix}_{t-1} + \alpha\bI)^{-1}\wb{\kermatrix}_{t-1}\bb_{t-1})\\
&= \frac{1}{\alpha}\bk_{[t-1],t}^\transp\bD_{t-1}(\bb_{t-1} - (\wb{\kermatrix}_{t-1} + \alpha\bI)^{-1}\wb{\kermatrix}_{t-1}\bb_{t-1}).
\end{align*}
\end{proof}

\begin{proof}[Proof of Lem.~\ref{lem:bound-online-tau}]
We  prove the lemma for a generic kernel $\kerfunc$ and kernel matrix $\kermatrix_T$.
Then, Lem.\,\ref{lem:bound-online-tau} simply follows by applying the proof to
$\wb{\kerfunc}$ and $\wb{\kermatrix}_T$.
From the definition of $\tau_{t,t}$ we have
\begin{align*}
\sum_{t=1}^{T} \tau_{t,t}
&= \sum_{t=1}^{T} \featvec_t^\transp(\featkermatrix_t\featkermatrix_t^\transp + \alpha\:I)^{-1}\featvec_t
= \sum_{t=1}^{T} (\featvec_t^\transp/\sqrt{\alpha})\left(\featkermatrix_t\featkermatrix_t^\transp/\alpha + \:I\right)^{-1}(\featvec_t/\sqrt{\alpha})
\leq \log(\Det(\featkermatrix_T\featkermatrix_T^\transp/\alpha + \:I)),
\end{align*}
where the last passage is proved by \citet{hazan_logarithmic_2006}.
Using Sylvester's determinant identity,
\begin{align*}
\Det(\featkermatrix_T\featkermatrix_T^\transp/\alpha + \:I) = \Det(\featkermatrix_T^\transp\featkermatrix_T/\alpha + \:I) = \prod_{t=1}^T (\lambda_t/\alpha + 1),
\end{align*}
where $\lambda_t$ are the eigenvalues of $\featkermatrix_T^\transp\featkermatrix_T = \kermatrix_T$. Then,
\begin{align*}
\sum_{t=1}^{T} \tau_{t,t}
\leq \log\left(\prod\nolimits_{t=1}^T (\lambda_t/\alpha + 1)\right)
= \sum\nolimits_{t=1}^T \log(\lambda_t/\alpha + 1).
\end{align*}
We can decompose this as
\begin{align*}
\sum_{t=1}^T\log(\lambda_t/\alpha + 1) &= \sum_{t=1}^T\log(\lambda_t/\alpha + 1)\left(\frac{\lambda_t/\alpha + 1}{\lambda_t/\alpha + 1}\right)\\
&= \sum_{t=1}^T\log(\lambda_t/\alpha + 1)\frac{\lambda_t/\alpha}{\lambda_t/\alpha + 1}
+\sum_{t=1}^T\frac{\log(\lambda_t/\alpha + 1)}{\lambda_t/\alpha + 1}\\
&\leq \log(\normsmall{\kermatrix_T}/\alpha + 1)\sum_{t=1}^T\frac{\lambda_t}{\lambda_t + \alpha}
+\sum_{t=1}^T\frac{\log(\lambda_t/\alpha + 1)}{\lambda_t/\alpha + 1}\\
&\leq \log(\normsmall{\kermatrix_T}/\alpha + 1)\deff^T(\alpha)
+\sum_{t=1}^T\frac{(\lambda_t/\alpha + 1) - 1}{\lambda_t/\alpha + 1}\\
&= \log(\normsmall{\kermatrix_T}/\alpha + 1)\deff^T(\alpha)
+\deff^T(\alpha),
\end{align*}
where the first inequality is due to $\normsmall{\kermatrix_T} \geq \lambda_t$
for all $t$ and the monotonicity of $\log(\cdot)$,
and the second inequality is due to
$\log(x) \leq x - 1$.
\end{proof}

\begin{proof}[Proof of Thm~\ref{thm:kernel-ons-regret}]
We need to bound $R_T(\bw^{*})$, and we use Prop.\,\ref{prop:exact-ons-regret}.
For $R_D$ nothing changes from the parametric case, and we use Asm.\,\ref{ass:scalar-lipschitz} and
the definition of the set $\feasibleset$ to bound
\begin{align*}
R_D = \sum\nolimits_{t=1}^T (\eta_t - \sigma_t)\dot{g}_t^2(\featvec_t^\transp(\bw_t - \bw))^2
\leq \sum\nolimits_{t=1}^T (\eta_t - \sigma)L^2(|\featvec_t^\transp\bw_t| + |\featvec_t^\transp\bw|)^2
\leq 4L^2C^2\sum\nolimits_{t=1}^T (\eta_t - \sigma).
\end{align*}

For $R_G$, we reformulate
\begin{align*}
\sum\nolimits_{t=1}^T \bg_t^\transp\bA_t^{-1}\bg_t
&= \sum\nolimits_{t=1}^T\frac{\eta_t}{\eta_t} \bg_t^\transp\bA_t^{-1}\bg_t
= \sum\nolimits_{t=1}^T\frac{1}{\eta_t} \wb{\featvec}_t^\transp\bA_t^{-1}\wb{\featvec}_t\\
&\leq \frac{1}{\eta_T} \sum\nolimits_{t=1}^T\wb{\featvec}_t^\transp\bA_t^{-1}\wb{\featvec}_t
= \frac{1}{\eta_T} \sum\nolimits_{t=1}^T\wb{\tau}_{t,t} = \wb{d}_{\text{onl}}(\alpha)/\eta_T
\leq \frac{\bdeff^T(\alpha)}{\eta_T}(1 + \log(\normsmall{\wb{\kermatrix}_T}/\alpha + 1)),
\end{align*}
where $\bdeff^T(\alpha)$ and $\bkermatrix_T$ are computed using the rescaled
kernel $\wb{\kerfunc}$.

Let us remind ourselves the definition $\bD = \Diag\left(\{\dot{g}_t\sqrt{\eta_t}\}_{t=1}^T\right)$. Since $\eta_t \neq 0$ and $\dot{g}_t \neq 0$ for all $t$, $\bD$ is invertible and we have $\lambda_{\min}(\bD^{-2}) = \min_{t=1}^T 1/(\dot{g}_t^2\eta_t) \geq 1/(L^2\eta_1)$. For simplicity, we assume $\eta_1 = \sigma$, leaving the case $\eta_1 = 1/1 = 1$ as a special case. We derive
\begin{align*}
\bdeff^T(\alpha)
&= \Tr(\wb{\kermatrix}_T(\wb{\kermatrix}_T + \alpha\bI)^{-1})\\
&= \Tr(\bD\kermatrix_T\bD(\bD\kermatrix_T\bD + \alpha\bD\bD^{-2}\bD)^{-1})\\
&= \Tr(\bD\kermatrix_T\bD\bD^{-1}(\kermatrix_T + \alpha\bD^{-2})^{-1}\bD^{-1})\\
&= \Tr(\kermatrix_T\bI(\kermatrix_T + \alpha\bD^{-2})^{-1}\bD^{-1}\bD)\\
&= \Tr(\kermatrix_T(\kermatrix_T + \alpha\bD^{-2})^{-1})\\
&\leq \Tr(\kermatrix_T(\kermatrix_T + \alpha\lambda_{\min}(\bD^{-2})\bI)^{-1})\\
&\leq \Tr\left(\kermatrix_T\left(\kermatrix_T + \frac{\alpha}{\sigma L^2}\bI\right)^{-1}\right)
=\deff^{T}\left( \alpha/(\sigma L^2) \right).
\end{align*}

Similarly,
\begin{align*}
&\log(\normsmall{\wb{\kermatrix}_T}/\alpha + 1)
\leq \log(\Tr(\wb{\kermatrix}_T)/\alpha + 1)
\leq \log(\sigma L^2 \Tr(\kermatrix_t)/\alpha + 1)
\leq \log(\sigma L^2 T/\alpha + 1)
\leq \log(2\sigma L^2 T/\alpha),
\end{align*}
since $\Tr(\kermatrix_t) = \sum_{t=1}^T k_{t,t} = \sum_{t=1}^T \featvec_t^\transp\featvec_t \leq \sum_{t=1}^T 1 = T$.
\end{proof}

\makeatletter{}
\section{Proofs for Section \ref{sec:online-squeak}}\label{app:online-squeak}

\begin{proof}[Proof of Thm.~\ref{thm:online-squeak}]
We derive the proof for a generic $\kerfunc$
with its induced $\featvec_t = \varphi(\bx_t)$
and $\kermatrix_t$. Then, \sketchkons (Alg.\,\ref{alg:ons-sketch})
applies this proof to the rescaled $\wb{\featvec}_t$ and $\wb{\kermatrix}_t$.

Our goal is to prove that Alg.\,\ref{alg:onsqueak} generates accurate and
small dictionaries at all time steps $t \in [T]$. More formally, a dictionary
$\coldict_s$ is $\varepsilon$-accurate w.r.t.\,$\dataset_t$ when
\begin{align*}
\normsmall{(\featkermatrix_t\featkermatrix_t^\transp + \alpha\bI)^{-1/2}\featkermatrix_t(\bI - \selmatrix_s\selmatrix_s^\transp)\featkermatrix_t^\transp(\featkermatrix_t\featkermatrix_t^\transp + \alpha\bI)^{-1/2}}
&= \normsmall{(\kermatrix_t + \alpha\bI)^{-1/2}\kermatrix_t^{1/2}(\bI - \selmatrix_s\selmatrix_s^\transp)\kermatrix_t^{1/2}(\kermatrix_t + \alpha\bI)^{-1/2}} \leq \varepsilon,
\end{align*}
where we used Prop.\,\ref{prop:accuracy-primal-dual-eq} to move from feature to
primal space.

We also introduce the projection operators,
\begin{align*}
\pvec_{t,i} :=& ((\kermatrix_t + \alpha\bI)^{-1}\kermatrix_t)^{1/2}\be_{t,i}\\
\bP_t :=& (\kermatrix_t + \alpha\bI)^{-1/2}\kermatrix_t^{1/2}\kermatrix_t^{1/2}(\kermatrix_t + \alpha\bI)^{-1/2}
= \sum_{s=1}^t\pvec_{t,s}\pvec_{t,s}^\transp = \pmat_t\pmat_t^\transp\\
\wt{\bP}_t :=& (\kermatrix_t + \alpha\bI)^{-1/2}\kermatrix_t^{1/2}\selmatrix_t\selmatrix_t^\transp\kermatrix_t^{1/2}(\kermatrix_t + \alpha\bI)^{-1/2}
= \sum_{s=1}^t\frac{z_t}{\wt{p}_t}\pvec_{t,s}\pvec_{t,s}^\transp = \pmat_t\selmatrix_t\selmatrix_t^\transp\pmat_t^\transp,
\end{align*}
where the $z_t$ variables are the $\{0,1\}$ random variables sampled by Alg.\,\ref{alg:onsqueak}.
Note that with this notation we have
\begin{align*}
\normsmall{\pvec_{t,i}\pvec_{t,i}^\transp}
&= \normsmall{((\kermatrix_t + \alpha\bI)^{-1}\kermatrix_t)^{1/2}\be_{t,i}\be_{t,i}^\transp(\kermatrix_t(\kermatrix_t + \alpha\bI)^{-1})^{1/2}}\\
&= \be_{t,i}^\transp((\kermatrix_t + \alpha\bI)^{-1}\kermatrix_t)^{1/2}(\kermatrix_t(\kermatrix_t + \alpha\bI)^{-1})^{1/2}\be_{t,i} = \be_{t,i}^\transp(\kermatrix_t + \alpha\bI)^{-1}\kermatrix_t\be_{t,i} = \tau_{t,i}.
\end{align*}
We can now formalize the event ``some of the guarantees of Alg.\,\ref{alg:onsqueak} do not hold'' and bound the probability of this event. In particular,
let 
\begin{align*}
\bY_t := \wt{\bP}_t - \bP_t  = \sum_{s=1}^t\left(\frac{z_t}{\wt{p}_t} - 1\right)\pvec_{t,s}\pvec_{t,s}^\transp. 
\end{align*}
We want to show
\begin{align*}
    &\probability\bigg(\exists  t\in [T]: \underbrace{\normsmall{\bY_t} \geq \varepsilon}_{A_t} \;\cup\; \underbrace{\sum\nolimits_{s=1}^t z_{t} \geq 3 \beta\donl^t(\alpha)}_{B_t}\bigg) \leq \delta,
\end{align*}
where event $A_t$ refers to the case when the intermediate dictionary $\coldict_t$ fails to accurately approximate $\kermatrix_t$ at some step $t \in [T]$ and event $B_t$ considers the case when the memory requirement is not met (i.e., too many columns are kept in a dictionary $\coldict_t$ at a certain time $t\in[T]$).

\textbf{Step 1: Splitting the problem.}
We can conveniently decompose the previous joint (negative) event into two separate conditions as
\begin{align*}    \probability\bigg(\bigcup_{t = 1}^{T} \big(A_t \cup B_t\big)\bigg)
    &=\probability\left(\left\{\bigcup_{t = 1}^{T} A_t\right\} \right) +\probability\left( \left\{ \bigcup_{t = 1}^{T} B_t\right\}\right) -\probability\left(\left\{\bigcup_{t = 1}^{T} A_t\right\} \cap \left\{ \bigcup_{t = 1}^{T} B_t\right\}\right) \\
    &=\probability\left(\left\{\bigcup_{t = 1}^{T} A_t\right\} \right) +\probability\left(\left\{\bigcup_{t = 1}^{T} B_t\right\} \cap \left\{ \bigcup_{t = 1}^{T} A_t\right\}^{\complement}\right) 
    =\probability\left(\left\{\bigcup_{t = 1}^{T} A_t\right\} \right) +\probability\left(\left\{\bigcup_{t = 1}^{T} B_t\right\} \cap \left\{ \bigcap_{t = 1}^{T} A_t^{\complement}\right\}\right) \\
    &=\probability\left(\left\{\bigcup_{t = 1}^{T} A_t\right\} \right) +\probability\left(\bigcup_{t = 1}^{T} \left\{B_t \cap \left\{ \bigcap_{t' = 1}^{T} A_{t'}^{\complement}\right\}\right\}\right).
\end{align*}
Applying this reformulation and a union bound, we obtain
\begin{align*}
    \probability\bigg(\exists  t\in [T]: \normsmall{\bY_t} &\geq \varepsilon \;\cup\; \sum\nolimits_{s=1}^t z_{t} \geq 3 \beta\donl^t(\alpha)\bigg)\\
     &\leq \sum_{t = 1}^{T} \probability\left( \normsmall{\bY_t} \geq \epsilon\right) +\sum_{t = 1}^{T}\probability\left(\sum_{s=1}^t z_{s} \geq 3\beta\donl^t(\alpha) \cap \left\{\forall \; t' \in \{1, \dots, t\} :  \normsmall{\bY_t} \leq \varepsilon \right\}\right).
\end{align*}
To conclude the proof, we show in Step 2 and 3, that each of the failure events happens with probability less than $\frac{\delta}{2T}$.

\textbf{Step 2: Bounding the accuracy.}
We first point out that dealing with $\bY_t$ is not trivial since the process $\{\bY_t\}_{t=1}^T$ is composed by matrices of different size, that cannot be directly compared.
Denote with $\selmatrix_s^t$ the matrix constructed by (1) taking $\selmatrix_s$
and adding $t-s$ rows of zeros to its bottom to extend it, and (2) adding $t-s$ indicator columns $\be_{t,i}$ for all $i > s$.
We begin by reformulating $\bY_t$ as a random process $\bY_0^t, \bY_1^t, \dots , \bY_t^t$ with differences $\bX_s^t$
defined as
\begin{align*}
\bX^t_s = \left(\frac{z_s}{\wt{p}_s} - 1\right)\pvec_{t,s}\pvec_{t,s}^\transp, && \bY_k^t
= \sum_{s=1}^{k} \bX_s^t = \sum_{s=1}^k \left(\frac{z_s}{\wt{p}_s} - 1\right)\pvec_{t,s}\pvec_{t,s}^\transp
= \pmat_t(\selmatrix_k^t(\selmatrix_k^t)^\transp - \bI)\pmat_t^\transp.
\end{align*}
We introduce the \textit{freezing} probabilities
\begin{align*}
\wb{p}_s = \wt{p}_s \cdot \indfunc\{\normsmall{\bY_{s-1}^t} < \varepsilon\} + 1 \cdot \indfunc\{\normsmall{\bY_{s-1}^t} \geq \varepsilon\}
\end{align*}
and the associated process $\wb{\bY}_s^t$ based on the coin flips $\wb{z}_s$
performed using $\wb{p}_{s}$ instead of the original $\wt{p}_s$ as in Alg.\,\ref{alg:onsqueak}. In other words, this process is such that if at any time $s-1$ the accuracy condition is not met, then for all steps from $s$ on the algorithm stops updating the dictionary. We also define $\wb{\bY}_t = \wb{\bY}_t^t$. Then we have
\begin{align*}
\probability\left( \normsmall{\bY_t} \geq \epsilon\right)
\leq \probability\left( \normsmall{\wb{\bY}_t} \geq \epsilon\right),
\end{align*}
so we can simply bound the latter to bound the former.
To show the usefulness of the freezing process, consider the step $\wb{s}$
where the process froze, or more formally define $\wb{s}$ as the step
where $\normsmall{\bY_{\wb{s}}^t} < \varepsilon$ and $\normsmall{\bY_{\wb{s}+1}^t} \geq \varepsilon$.
Then for all $s \leq \wb{s}$, we can combine Prop.\,\ref{prop:norm-to-lowner-guar},
the definition of $\pmat_t$,
and the guarantee that $\normsmall{\bY_{s}^t} < \varepsilon$ to obtain
\begin{align*}
    \featkermatrix_s\selmatrix_s\selmatrix_s^\transp\featkermatrix_s
    \preceq \featkermatrix_t\selmatrix_s^t(\selmatrix_s^t)^\transp\featkermatrix_t
    \preceq \featkermatrix_{t}\featkermatrix_{t}^\transp + \varepsilon(\featkermatrix_t\featkermatrix_t^\transp + \gamma\bI),
\end{align*}
where in the first inequality we used the fact that $\bS_s^t$ is simply obtained by bordering $\bS_s$. \todoaout{This step may be explained in more detail.}
Applying the definition of $\wt{p}_s$, when $\wt{p}_s < 1$ we have
\begin{align*}
\wb{p}_s = \wt{p}_s = \beta\atau_{s,s} &= \beta(1 + \varepsilon)\featvec_s^\transp(\featkermatrix_{s}\selmatrix_{s}\selmatrix_{s}^\transp\featkermatrix_{s}^\transp + \gamma \bI)^{-1}\featvec_s\\
&\geq \beta(1 + \varepsilon)\featvec_s^\transp(\featkermatrix_{s}\featkermatrix_{s}^\transp + \varepsilon(\featkermatrix_t\featkermatrix_t^\transp + \gamma\bI) + \gamma \bI)^{-1}\featvec_s\\
&= \beta(1 + \varepsilon)\frac{1}{1+\varepsilon}\featvec_s^\transp(\featkermatrix_t\featkermatrix_t^\transp + \gamma \bI)^{-1}\featvec_s,
= \beta\tau_{t,s},
\end{align*}
which shows that our estimates of RLS are upper bounds on the true values.

\todomi{the $\gamma$ should be $\alpha$ above}

From this point onwards we focus on a specific $t$, and omit the index
from $\wb{\bY}_s^t$, $\wb{\bX}_s^t$ and $\bv_{t,s}$.
We can now verify that $\wb{\bY}_s$ is a martingale, by showing that $\wb{\bX}_s$ is
zero mean.
Denote with $\filtration_k = \{\wb{\bX}_s\}_{s=1}^k$ the filtration of the process.
When $\wb{p}_s = 1$, either because $\beta\wt{p}_s \geq 1$ or
because the process is frozen, we have $\wb{\bX}_s = \bsym{0}$
and the condition is satisfied.
Otherwise, we have
\begin{align*}
\expectedvalue\left[\wb{\bX}_s \condbar \filtration_{s-1}\right]
= \expectedvalue\left[\left(\frac{\wb{z}_s}{\wb{p}_s} - 1\right)\pvec_{s}\pvec_{s}^\transp \condbar \filtration_{s-1}\right]
= \left(\frac{\expectedvalue\left[\wb{z}_s\condbar \filtration_{s-1}\right]}{\wb{p}_s} - 1\right)\pvec_{s}\pvec_{s}^\transp
= \left(\frac{\wb{p}_s}{\wb{p}_s} - 1\right)\pvec_{s}\pvec_{s}^\transp = \bsym{0},
\end{align*}
where we use the fact that $\wb{p}_s$ is fixed conditioned on $\filtration_{s-1}$ and its the (conditional) expectation of $\wb{z}_s$.
Since $\wb{\bY}_t$ is a martingale, we can use Prop.~\ref{prop:matrix-freedman}.
First, we find $R$. Again, when $\wb{p}_s = 1$ we have $\bX_s = \bsym{0}$ and
$R \geq 0$. Otherwise,
\begin{align*}
\normempty{\left(\frac{\wb{z}_s}{\wb{p}_s} - 1\right)\pvec_{s}\pvec_{s}^\transp}
\leq \left|\left(\frac{\wb{z}_s}{\wb{p}_s} - 1\right)\right|\normsmall{\pvec_{s}\pvec_{s}^\transp}
\leq \frac{1}{\wb{p}_s}\tau_{t,s}
\leq \frac{\tau_{t,s}}{\beta\tau_{t,s}}
=\frac{1}{\beta} := R.
\end{align*}
For the total variation, we expand
\begin{align*}
\wb{\bW}_t := \sum_{s=1}^t \expectedvalue\left[\wb{\bX}_s^2 \condbar \filtration_{s-1}\right]
&= \sum_{s=1}^t \expectedvalue\left[\left(\frac{\wb{z}_s}{\wb{p}_s} - 1\right)^2 \condbar \filtration_{s-1}\right] \pvec_{s}\pvec_{s}^\transp\pvec_{s}\pvec_{s}^\transp\\
&= \sum_{s=1}^t
\left(\expectedvalue\left[\frac{\wb{z}_s^2}{\wb{p}_s^2} \condbar \filtration_{s-1}\right]
- \expectedvalue\left[2\frac{\wb{z}_s}{\wb{p}_s} \condbar \filtration_{s-1}\right]
+ 1\right) \pvec_{s}\pvec_{s}^\transp\pvec_{s}\pvec_{s}^\transp\\
&= \sum_{s=1}^t
\left(\expectedvalue\left[\frac{\wb{z}_s}{\wb{p}_s^2} \condbar \filtration_{s-1}\right] - 1\right) \pvec_{s}\pvec_{s}^\transp\pvec_{s}\pvec_{s}^\transp
=\sum_{s=1}^t\left(\expectedvalue\left[\frac{\wb{z}_s}{\wb{p}_s^2} \condbar \filtration_{s-1}\right] - 1\right) \pvec_{s}\pvec_{s}^\transp\pvec_{s}\pvec_{s}^\transp\\
&=\sum_{s=1}^t\left(\frac{1}{\wb{p}_s} - 1\right) \pvec_{s}\pvec_{s}^\transp\pvec_{s}\pvec_{s}^\transp
=\sum_{s=1}^t\left(\frac{\pvec_{s}^\transp\pvec_{s}}{\wb{p}_s} - \pvec_{s}^\transp\pvec_{s}\right) \pvec_{s}\pvec_{s}^\transp
=\sum_{s=1}^t\left(\frac{\tau_{t,s}}{\wb{p}_s} - \tau_{t,s}\right) \pvec_{s}\pvec_{s}^\transp,
\end{align*}
where we used the fact that $\wb{z}_s^2 = \wb{z}_s$ and $\expectedvalue[\wb{z}_s | \filtration_{s-1}] = \wb{p}_s$.
We can now bound this quantity as
\begin{align*}
\normempty{\wb{\bW}_t} = \normempty{\sum_{s=1}^t \expectedvalue\left[\wb{\bX}_s^2 \condbar \filtration_{s-1}\right]}
&=\normempty{\sum_{s=1}^t\left(\frac{\tau_{t,s}}{\wb{p}_s} - \tau_{t,s}\right) \pvec_{s}\pvec_{s}^\transp}
\leq\normempty{\sum_{s=1}^t\frac{\tau_{t,s}}{\wb{p}_s}\pvec_{s}\pvec_{s}^\transp}
\leq\normempty{\sum_{s=1}^t\frac{\tau_{t,s}}{\beta\tau_{t,s}}\pvec_{s}\pvec_{s}^\transp}\\
&=\frac{1}{\beta}\normempty{\sum_{s=1}^t\pvec_{s}\pvec_{s}^\transp}
=\frac{1}{\beta}\normempty{\pmat_t\pmat_t^\transp}
=\frac{1}{\beta}\normempty{\bP_t}
\leq \frac{1}{\beta} := \sigma^2.
\end{align*}
Therefore, if we let $\sigma^2 = 1/\beta$ and $R = 1/\beta$, we have
\begin{align*}
\probability\left( \normsmall{\bY_t} \geq \epsilon\right)
&\leq \probability\left( \normsmall{\wb{\bY}_t} \geq \epsilon\right)
= \probability\left( \normsmall{\wb{\bY}_t} \geq \epsilon \cap \normsmall{\wb{\bY}_t} \leq \sigma^2\right)
+ \probability\left( \normsmall{\wb{\bY}_t} \geq \epsilon \cap \normsmall{\wb{\bY}_t} \geq \sigma^2\right)\\
&\leq \probability\left( \normsmall{\wb{\bY}_t} \geq \epsilon \cap \normsmall{\wb{\bY}_t} \leq \sigma^2\right)
+ \probability\left(\normsmall{\wb{\bY}_t} \geq \sigma^2\right)
\leq 2t\exp\left\{- \frac{\varepsilon^2}{2}\frac{1}{\frac{1}{\beta}(1 + \varepsilon/3)}\right\}
+ 0
\leq 2t\exp\left\{- \frac{\varepsilon^2\beta}{3}\right\}\cdot
\end{align*}

\textbf{Step 3: Bounding the space.}
We want to show that
\begin{align*}
\probability\left(\sum_{s=1}^t z_{s} \geq 3\beta\donl^t(\alpha) \cap \left\{\forall \; t' \in \{1, \dots, t\} :  \normsmall{\bY_t} \leq \varepsilon \right\}\right).
\end{align*}
Assume, without loss of generality, that for all $s \in [t]$ we have $\beta\tau_{s,s} \leq 1$,
and introduce the independent Bernoulli random variables $\wh{z}_s \sim \bernoullidist(\beta\tau_{s,s})$.
Thanks to the intersection with the event $\left\{\forall \; t' \in \{1, \dots, t\} :  \normsmall{\bY_t} \leq \varepsilon \right\}$, we know that all dictionaries $\coldict_s$ are $\varepsilon$-accurate,
and therefore for all $s$ we have $\wt{p}_s \leq \beta\atau_{s,s} \leq  \beta\tau_{s,s}$. Thus
$\wh{z}_s$ stochastically dominates $z_s$ and we have
\begin{align*}
\probability\left(\sum_{s=1}^t z_{s} \geq 3\beta\donl^t(\alpha) \cap \left\{\forall \; t' \in \{1, \dots, t\} :  \normsmall{\bY_t} \leq \varepsilon \right\}\right)
 \leq \probability\left(\sum_{s=1}^t \wh{z}_{s} \geq 3\beta\donl^t(\alpha)\right).
\end{align*}
Applying Prop.\,\ref{prop:hoeff-sum-bern} to $\sum_{s=1}^t \wh{z}_{s}$ and knowing that
$\sum_{s=1}^t p_t = \sum_{s=1}^t \beta\tau_{s,s} = \beta\donl^t(\alpha)$, we have
\begin{align*}
\probability\left(\sum_{s=1}^t \wh{z}_{s} \geq 3\beta\donl^t(\alpha)\right)
\leq \exp\{-3\beta\donl^t(\alpha)(3\beta\donl^t(\alpha) - (\log(3\beta\donl^t(\alpha)) + 1))\}
\leq \exp\{-2\beta\donl^t(\alpha)\}.
\end{align*}
Assuming $\donl^t(\alpha) \geq 1$, we have that $\exp\{-2\beta\donl^t(\alpha)\} \leq \exp\{-2\beta\} \leq \exp\{-\log((T/\delta)^2)\} \leq \delta^2/T^2 \leq \delta/(2T)$ as long as $2\delta \leq 2 \leq T$.
\end{proof}

\makeatletter{}
\section{Proofs for Section \ref{sec:skethed-ons}}\label{app:skethed-ons}

\begin{proof}[Proof of Lemma~\ref{lem:sketched-computable-reformulation}]
Through this proof, we make use of the linear algebra identity from
Prop.\,\ref{prop:split-for-efficiency}.
We begin with the reformulation of $\wt{b}_{i} = [\wt{\bb}_t]_i$. In particular, the only term that we
need to reformulate is
\begin{align*}
&\wb{\featvec}_t\wt{\bA}_{t-1}^{-1}\wb{\featvec}_t
= \wb{\featvec}_t(\wb{\featkermatrix}_{t-1}\selrmatrix_{t-1}\selrmatrix_{t-1}^\transp\wb{\featkermatrix}_{t-1}^\transp + \alpha\bI)^{-1}\wb{\featvec}_t\\
&= \frac{1}{\alpha}\wb{\featvec}_t(\bI - \wb{\featkermatrix}_{t-1}\selrmatrix_{t-1}(\selrmatrix_{t-1}^\transp\wb{\featkermatrix}_{t-1}^\transp\wb{\featkermatrix}_{t-1}\selrmatrix_{t-1} + \alpha\bI)^{-1}\selrmatrix_{t-1}^\transp\wb{\featkermatrix}_{t-1}^\transp)\wb{\featvec}_t\\
&= \frac{1}{\alpha}(\wb{\featvec}_t^\transp\wb{\featvec}_t - \wb{\featvec}_t^\transp\wb{\featkermatrix}_{t-1}\selrmatrix_{t-1}(\selrmatrix_{t-1}^\transp\wb{\featkermatrix}_{t-1}^\transp\wb{\featkermatrix}_{t-1}\selrmatrix_{t-1} + \alpha\bI)^{-1}\selrmatrix_{t-1}^\transp\wb{\featkermatrix}_{t-1}^\transp\wb{\featvec}_t)\\
&= \frac{1}{\alpha}(k_{t,t} - \wb{\bk}_{[t-1],t}^\transp\selrmatrix_{t-1}(\selrmatrix_{t-1}^\transp\wb{\kermatrix}_{t-1}\selrmatrix_{t-1} + \alpha\bI)^{-1}\selrmatrix_{t-1}^\transp\wb{\bk}_{[t-1],t}).
\end{align*}
For $\breve{y}_t$, we have
\begin{align*}
\breve{y}_{t} &= \featvec_{t}^\transp\wt{\bu}_{t} = \featvec_{t}^\transp\wt{\bA}_{t-1}^{-1}\wb{\featkermatrix}_{t-1}\wt{\bb}_{t-1}\\
&= \featvec_{t}^\transp(\wb{\featkermatrix}_{t-1}\selrmatrix_{t-1}\selrmatrix_{t-1}^\transp\wb{\featkermatrix}_{t-1}^\transp + \alpha\bI)^{-1}\wb{\featkermatrix}_{t-1}\wt{\bb}_{t-1}\\
&= \frac{1}{\alpha}\featvec_{t}^\transp\left(\bI - \wb{\featkermatrix}_{t-1}\selrmatrix_{t-1}(\selrmatrix_{t-1}^\transp\wb{\featkermatrix}_{t-1}^\transp\wb{\featkermatrix}_{t-1}\selrmatrix_{t-1} + \alpha\bI)^{-1}\selrmatrix_{t-1}^\transp\wb{\featkermatrix}_{t-1}^\transp\right)\wb{\featkermatrix}_{t-1}\wt{\bb}_{t-1}\\
&= \frac{1}{\alpha}\featvec_{t}^\transp\featkermatrix_{t-1}\bD_{t-1}\left(\wt{\bb}_{t-1} - \selrmatrix_{t-1}(\selrmatrix_{t-1}^\transp\wb{\kermatrix}_{t-1}\selrmatrix_{t-1} + \alpha\bI)^{-1}\selrmatrix_{t-1}^\transp\wb{\kermatrix}_{t-1}\wt{\bb}_{t-1}\right)\\
&= \frac{1}{\alpha}\bk_{[t-1],t}^\transp\bD_{t-1}\left(\wt{\bb}_{t-1} - \selrmatrix_{t-1}(\selrmatrix_{t-1}^\transp\wb{\kermatrix}_{t-1}\selrmatrix_{t-1} + \alpha\bI)^{-1}\selrmatrix_{t-1}^\transp\wb{\kermatrix}_{t-1}\wt{\bb}_{t-1}\right).
\end{align*}
\end{proof}

\begin{proof}[Proof of Theorem~\ref{thm:sketch-ons-main-thm}]
Since the only thing that changed is the formulation of the $\bA_t$ matrix,
the bound from Prop.\,\ref{prop:exact-ons-regret} still applies. In particular,
we have that the regret $\wt{R}_T$ of Alg.\,\ref{alg:ons-sketch} is bounded as
\begin{align*}
\wt{R}(\bw) \leq &\alpha\normsmall{\bw}_{\bA_0}^2
+ \sum\nolimits_{t=1}^T \bg_t^\transp\wt{\bA}_t^{-1}\bg_t +\sum\nolimits_{t=1}^T (\bw_t - \bw)^\transp(\wt{\bA}_t - \wt{\bA}_{t-1} - \sigma_t\bg_t\bg_t^\transp)(\bw_t - \bw).
\end{align*}
From Thm.\,\ref{thm:online-squeak}, we have that \onlinesqueak succeeds with high probability.
In particular, using the guarantees of the $\varepsilon$-accuracy (1), we can bound for the case $\eta_t = \sigma$ as
\begin{align*}
\bg_t^\transp\wt{\bA}_t^{-1}\bg_t
&= \frac{\eta_t}{\eta_t}\bg_t^\transp(\wb{\featkermatrix}_t\selrmatrix_t\selrmatrix_t^\transp\wb{\featkermatrix}_t^\transp + \alpha\bI)^{-1}\bg_t
= \frac{1}{\eta_t}\wb{\featvec}_t^\transp(\wb{\featkermatrix}_t\selrmatrix_t\selrmatrix_t^\transp\wb{\featkermatrix}_t^\transp + \alpha\bI)^{-1}\wb{\featvec}_t\\
&= \frac{1}{\eta_t}\frac{\wt{p}_{\min}}{\wt{p}_{\min}}\wb{\featvec}_t^\transp(\wb{\featkermatrix}_t\selrmatrix_t\selrmatrix_t^\transp\wb{\featkermatrix}_t^\transp + \alpha\bI)^{-1}\wb{\featvec}_t
= \frac{1}{\eta_t}\frac{1}{\wt{p}_{\min}}\wb{\featvec}_t^\transp\left(\frac{1}{\wt{p}_{\min}}\wb{\featkermatrix}_t\selrmatrix_t\selrmatrix_t^\transp\wb{\featkermatrix}_t^\transp + \alpha\bI\right)^{-1}\wb{\featvec}_t\\
&\leq \frac{1}{\eta_t}\frac{1}{\wt{p}_{\min}}\wb{\featvec}_t^\transp\left(\wb{\featkermatrix}_t\selmatrix_t\selmatrix_t^\transp\wb{\featkermatrix}_t^\transp + \alpha\bI\right)^{-1}\wb{\featvec}_t\\
&\leq \frac{1}{\wt{p}_{\min}\eta_t}\wb{\featvec}_t^\transp((1-\varepsilon)\wb{\featkermatrix}_t\wb{\featkermatrix}_t^\transp -\epsilon\alpha\bI + \alpha\bI)^{-1}\wb{\featvec}_t\\
&= \frac{1}{(1-\varepsilon)\sigma\wt{p}_{\min}}\wb{\featvec}_t^\transp(\wb{\featkermatrix}_t\wb{\featkermatrix}_t^\transp + \alpha\bI)^{-1}\wb{\featvec}_t = \frac{\wb{\tau}_{t,t}}{(1-\varepsilon)\sigma\wt{p}_{\min}}\CommaBin
\end{align*}
where in the first inequality we used the fact that the weight matrix $\bS_t$ contains weights such that $1/\sqrt{\wt{p}_{\min}} \geq 1/\sqrt{\wt{p}_t}$, in the second inequality we used the $\varepsilon$-accuracy, and finally, we used $\eta_t = \sigma$ and the definition of $\wb{\tau}_{t,t}$.
Therefore,
\begin{align*}
R_G = \sum\nolimits_{t=1}^T \bg_t^\transp\wt{\bA}_t^{-1}\bg_t
\leq \frac{1}{(1-\varepsilon)\sigma\wt{p}_{\min}}\sum\nolimits_{t=1}^T\wb{\tau}_{t,t}
\leq \frac{\wb{d}_{\text{onl}}(\alpha)}{(1-\varepsilon)\sigma\max\{\beta\wb{\tau}_{\min}, \gamma\}}\cdot
\end{align*}
To bound $R_D$, we have
\begin{align*}
\sum\nolimits_{t=1}^T (\bw_t - \bw)^\transp(\wt{\bA}_t - \wt{\bA}_{t-1} - \sigma_t\bg_t\bg_t^\transp)(\bw_t - \bw)
&= \sum\nolimits_{t=1}^T (\bw_t - \bw)^\transp\left(\eta_t z_t\bg_t\bg_t^\transp - \sigma_t\bg_t\bg_t^\transp\right)(\bw_t - \bw)\\
&\leq \sum\nolimits_{t=1}^T (\sigma - \sigma_t)(\bg_t^\transp(\bw_t - \bw))^2
 \leq 0.
\end{align*}
\end{proof}

\todomi{influence of $\gamma$ ? on the space complexity?}

\end{document}